\theoremstyle{plain}
\newtheorem{theorem}{Theorem}[section]
\newtheorem{lemma}[theorem]{Lemma}
\theoremstyle{definition}
\newtheorem{definition}[theorem]{Definition}
\theoremstyle{remark}
\icmltitlerunning{On Stronger Computational Separations Between Multimodal and Unimodal Machine Learning}
\begin{document}

\twocolumn[
\icmltitle{On Stronger Computational Separations Between\\ Multimodal and Unimodal Machine Learning}

% It is OKAY to include author information, even for blind
% submissions: the style file will automatically remove it for you
% unless you've provided the [accepted] option to the icml2024
% package.

% List of affiliations: The first argument should be a (short)
% identifier you will use later to specify author affiliations
% Academic affiliations should list Department, University, City, Region, Country
% Industry affiliations should list Company, City, Region, Country

% You can specify symbols, otherwise they are numbered in order.
% Ideally, you should not use this facility. Affiliations will be numbered
% in order of appearance and this is the preferred way.
\icmlsetsymbol{equal}{*}

\begin{icmlauthorlist}
\icmlauthor{Ari Karchmer}{yyy}
% \icmlauthor{Firstname2 Lastname2}{equal,yyy,comp}
% \icmlauthor{Firstname3 Lastname3}{comp}
% \icmlauthor{Firstname4 Lastname4}{sch}
% \icmlauthor{Firstname5 Lastname5}{yyy}
% \icmlauthor{Firstname6 Lastname6}{sch,yyy,comp}
% \icmlauthor{Firstname7 Lastname7}{comp}
% %\icmlauthor{}{sch}
% \icmlauthor{Firstname8 Lastname8}{sch}
% \icmlauthor{Firstname8 Lastname8}{yyy,comp}
%\icmlauthor{}{sch}
%\icmlauthor{}{sch}
\end{icmlauthorlist}

\icmlaffiliation{yyy}{Department of Computer Science, Boston University, Boston, MA, USA}
% \icmlaffiliation{comp}{Company Name, Location, Country}
% \icmlaffiliation{sch}{School of ZZZ, Institute of WWW, Location, Country}

\icmlcorrespondingauthor{Ari Karchmer}{arika@bu.edu}
% \icmlcorrespondingauthor{Firstname2 Lastname2}{first2.last2@www.uk}

% You may provide any keywords that you
% find helpful for describing your paper; these are used to populate
% the "keywords" metadata in the PDF but will not be shown in the document
\icmlkeywords{Multimodal machine learning, theory of machine learning, average-case complexity, ICML}

\vskip 0.3in
]

% this must go after the closing bracket ] following \twocolumn[ ...

% This command actually creates the footnote in the first column
% listing the affiliations and the copyright notice.
% The command takes one argument, which is text to display at the start of the footnote.
% The \icmlEqualContribution command is standard text for equal contribution.
% Remove it (just {}) if you do not need this facility.

%\printAffiliationsAndNotice{}  % leave blank if no need to mention equal contribution
\printAffiliationsAndNotice{} % otherwise use the standard text.

\begin{abstract}

Recently, multimodal machine learning has enjoyed huge empirical success (e.g. GPT-4).
Motivated to develop theoretical justification for this empirical success, Lu (NeurIPS '23, ALT '24) introduces a theory of multimodal learning, and considers possible \textit{separations} between theoretical models of multimodal and unimodal learning.
In particular, Lu (ALT '24) shows a computational separation, which is relevant to \textit{worst-case} instances of the learning task.
In this paper, we give a stronger \textit{average-case} computational separation, where for ``typical'' instances of the learning task, unimodal learning is computationally hard, but multimodal learning is easy. 
We then question how ``natural'' the average-case separation is. Would it be encountered in practice? To this end, we prove that under basic conditions, any given computational separation between average-case unimodal and multimodal learning tasks implies a corresponding cryptographic key agreement protocol. 
We suggest to interpret this as evidence that very strong \textit{computational} advantages of multimodal learning may arise \textit{infrequently} in practice, since they exist only for the ``pathological'' case of inherently cryptographic distributions. However, this does not apply to possible (super-polynomial) \textit{statistical} advantages.

\end{abstract}

\section{Introduction}
\label{submission}

For humans, multimodal perception---the ability to interpret the same or similar information expressed in multiple ways (e.g. text and image)---is absolutely critical to learning. We hold it as self-evident that access to multiple representations of the same idea can ease the process of forming a mental model applicable to new situations (``when you put it that way...'').

Empirical triumphs of \textit{Machine Learning} from multimodal data such as GPT-4 \citep{achiam2023gpt}, Gemini \citep{team2023gemini}, and Gato \citep{reed2022generalist} suggest that multimodal perception is also really useful for some machine learning tasks. \citet{lu2023theory, lu2023computational} introduces a formal study of multimodal versus unimodal machine learning tasks, in order to develop theoretical justification for the empirical results (see also \citet{huang2021makes} and others; we elaborate on related work in Section \ref{section:related}). However, the theory of multimodal learning is still in its infancy. The main theoretical question is:
\begin{quote}
    \emph{Is multimodal data truly (provably) more useful than unimodal data, or is it a mirage?}
\end{quote}

To attack this question, \citet{lu2023theory} first shows a \textit{statistical} separation: that there exist machine learning tasks that do require asymptotically more samples to complete when the data is expressed unimodally as opposed to multimodally. Second, \citet{lu2023computational} shows that not only is there a statistical separation, but there also exist machine learning tasks that might be \textit{computationally} easier when given access to bimodal data (two modes), rather than just unimodal data. The computational separation of \citet{lu2023computational} identifies a machine learning task that is possible in polynomial time with bimodal data, but not with unimodal data, \textit{for it's worst-case instance}. This means that the unimodal learning task could still possibly be easy on most or ``typical'' instances. 
Of course, Lu's separation requires a relatively weak assumption of computational hardness: that a certain ${\sf NP}$-hard problem is not also in ${\sf P}$. 

In this work, we continue to develop a theory of multimodal learning, in pursuit of the truth about how useful multimodal data is (when compared to unimodal data). In particular, we study the existence of stronger computational separations, which apply to the \textit{average-case} instances of learning tasks. An average-case computational separation can inform the practice of multimodal learning more comprehensively than a worst-case separation, since it would apply with high probability over a randomized process that determines the learning task.

More specifically (though still informally), an average-case computational separation is a multimodal learning task, where a ``typical'' instance of the task is learnable in polynomial time, while the corresponding ``typical'' instance of a \textit{unimodal} learning task is unlearnable in polynomial time. The notion of a typical instance is formalized by considering a fixed \textit{distribution} over learning tasks (and thus some small probability of failure to learn), instead of a universal quantification. We define average-case multimodal learning tasks and computational separations precisely in Section 2.

Our first result is an average-case computational separation, under the computational assumption that learning parities in the presence of a little random learning noise is hard. More specifically, 

\begin{definition}[LPN assumption]\label{intro:def:searchLPN}
For any length parameter $n \in \mathbb{N}$ and noise rate $\theta \in (0,0.5)$, the $t{\rm -LPN}_{\theta, n}$ assumption is that for every probabilistic algorithm $\mbf{I}$ running in time $t(n)$, 
\[
\Pr_{\mbf{x},\mbf{A}, \mbf{b}}[\mbf{I}(\mbf{A}, \mbf{x}\mbf{A} + \mbf{b})= \mbf{x}]< 1/t(n)
\]
Here, $\mbf{x}$ is a uniformly random element of $\ints_2^{1 \times n}$, $\mbf{A}$ is a uniformly random element of $\ints_2^{n \times t(n)}$ and $\mbf{b} \in \ints_2^{1 \times t(n)}$ is sampled element-wise from $\mathrm{Ber}(\theta)$.
\end{definition}

We construct an average-case computational separation under the $\poly{\rm -LPN}_{\theta, n}$ assumption where $\theta \triangleq n^{-0.5}$. 
We refer informally to this assumption as low-noise LPN.\footnote{The low-noise LPN assumption is a popular conjecture in the cryptographic literature, as it is known to imply public key encryption \cite{alekhnovich2003more}, and pseudo-random functions with extremely low circuit depth \cite{yu2016pseudorandom}. In particular, it is common to conjecture subexponential (i.e., $2^{n^{\epsilon}}$) hardness. 
% On the other hand, the ``standard'' LPN assumption, where the noise rate $\theta \in \Theta(1)$, is not known to imply either. Standard LPN does, however, imply the standard cryptographic primitives such as pseudorandom generators and digital signatures.
}

\begin{theorem}[Informal]\label{intro:septhm}
    Under the low-noise LPN assumption, there exists an average-case bimodal learning task that can be completed in polynomial time, and a corresponding average-case unimodal learning task that cannot be completed in polynomial time.
\end{theorem}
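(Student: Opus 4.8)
The plan is to exhibit a concrete bimodal learning task built around the LPN distribution, and show that the second modality supplies exactly the information needed to decode the secret efficiently, while either modality alone inherits the hardness of low-noise LPN. Concretely, I would let an instance be parametrized by a hidden parity vector $\mbf{x} \in \ints_2^{1\times n}$ together with a random matrix $\mbf{A}$. The \emph{first} modality presents the noisy inner products $\mbf{x}\mbf{A} + \mbf{b}$ (i.e.\ a standard low-noise LPN sample), and the \emph{target} to be learned is some efficiently-computable-from-$\mbf{x}$ predicate (for instance, a bit of $\mbf{x}$, or a parity hidden inside $\mbf{x}$ that determines the label of future query points $\mbf{q}\mapsto \langle \mbf{x},\mbf{q}\rangle$). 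I would design the \emph{second} modality so that, \emph{jointly} with the first, it reveals the noiseless version of enough equations — e.g.\ it hands over the noise pattern $\mbf{b}$, or the error locations, or an error-free subsystem — which collapses LPN to plain Gaussian elimination and lets a polynomial-time learner recover $\mbf{x}$ exactly and hence predict perfectly. This realizes the ``bimodal is easy'' half directly by algorithm design.

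For the ``unimodal is hard'' half, I would set things up so that each \emph{single} modality's marginal distribution is (a rearrangement of) the low-noise LPN distribution itself, so that a polynomial-time unimodal learner achieving non-trivial advantage could be converted, by a black-box reduction, into an inverter $\mbf{I}$ contradicting $\poly\text{-}{\rm LPN}_{\theta,n}$ with $\theta = n^{-0.5}$. The key steps in order are: (i) fix the formal definitions of average-case (bi)modal learning tasks from Section~2 and identify what ``completing the task'' quantitatively requires (sample access, success probability, output format); (ii) define the joint distribution over $(\text{modality-1}, \text{modality-2}, \text{label})$ instances and verify each single-modality marginal is LPN-like; (iii) give the explicit polynomial-time bimodal learner and bound its error — here I would use the fact that with $\theta=n^{-0.5}$ and enough samples the revealed side-information isolates $\Theta(n)$ clean equations with overwhelming probability, so Gaussian elimination succeeds except with negligible probability; (iv) prove the reduction from a hypothetical efficient unimodal learner to an LPN inverter, being careful that the learner only needs weak (inverse-polynomial) advantage, matching the $1/t(n)$ threshold in Definition~\ref{intro:def:searchLPN}.

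The main obstacle I anticipate is step (iv) combined with the tension in step (ii): I must make the single-modality view \emph{genuinely} as hard as low-noise LPN while simultaneously making the \emph{pair} trivially easy, without the second modality leaking information that a unimodal learner could itself have reconstructed. In particular, revealing ``the noise pattern $\mbf{b}$'' as a standalone second modality is dangerous if its marginal, or its correlation structure, accidentally helps; I would guard against this by ensuring modality~2 in isolation is distributed \emph{independently} of $\mbf{x}$ (e.g.\ it is just a fresh random-looking string whose meaning is unlocked only when XORed/aligned with modality~1), so its marginal learning task is information-theoretically impossible, not merely computationally hard. A secondary technical point is tuning the noise rate and sample count so that Gaussian elimination on the de-noised system succeeds with probability $1-\mathrm{negl}(n)$ while the noise rate stays in the regime $(0,0.5)$ and large enough that $\poly\text{-}{\rm LPN}_{n^{-0.5},n}$ is the standard conjectured-hard low-noise regime; this is a routine calculation once the construction is fixed, so I would defer it. Finally, I would double-check that the separation is stated for the \emph{average-case} learner (success with high probability over the task distribution), which is immediate from the construction since the hard direction's reduction and the easy direction's analysis both hold with all but negligible probability over the same randomness.
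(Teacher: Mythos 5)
Your high-level intuition is the same as the paper's (use LPN to create heterogeneity, let the second modality act as a trapdoor, embed LPN hardness in the unimodal marginals), but your concrete construction does not fit the average-case model in which the theorem is stated, and it breaks exactly where the real work is. In the framework of Section 2, a task is $\mu=(\chi,\eta,\zeta)$: modality one is drawn from a \emph{fixed, task-independent} distribution $\chi$, modality two is $\phi[x]$ for $\phi\sim\eta$, and the label is $\psi[y]$ for $\psi\sim\zeta$; the concept to be learned is $\psi$, and the output hypothesis must predict $z$ from $y$ alone at test time. Consequently: (a) a per-task hidden parity $\mbf{x}$ cannot parametrize the first modality, since per-task secrets can only enter through $\eta$ and $\zeta$; and (b) the label noise is generated inside $\psi$, i.e.\ \emph{after} $y$, so the second modality cannot ``hand over the noise pattern $\mbf{b}$, the error locations, or an error-free subsystem'' for the labels. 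If you instead make the labels noiseless parities of visible queries, a unimodal learner Gaussian-eliminates them just as easily; if you make them noisy, your bimodal learner has nothing to denoise with. Resolving this tension --- getting labels that are predictable from $y$ alone after training, yet unlearnable from $(y,z)$ pairs and from $(x,z)$ pairs (Definition \ref{def:avgsep} requires hardness of \emph{both} projections $\rho_{\cX,\cZ}$ and $\rho_{\cY,\cZ}$) --- is precisely the content of the construction, and your sketch defers it.

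For comparison, the paper uses a two-secret structure rather than noise disclosure: modality one is a \emph{fresh per-datapoint} sparse LPN secret $\mbf{x}$ together with a random pointer $i$; modality two is the LPN sample $(\mbf{A},\mbf{x}\mbf{A}+\mbf{b}+\mbf{e}^{(i)})$ with an extra error planted at coordinate $i$; the per-task concept is a sparse vector $\mbf{w}$, and labels are noisy parities $(\mbf{Y}\mbf{w}+\mbf{b}',\mbf{y}\mbf{w}+\mbf{b}'')$ of modality two. The bimodal algorithm never runs Gaussian elimination: it computes $\mbf{x}_j\mbf{z}_j+z_j$, which cancels the large term $\mbf{x}_j\mbf{A}_j\mbf{w}$ and leaves $\mbf{w}_i$ plus a sum of sparse-noise terms biased toward $0$, then recovers $\mbf{w}$ coordinate-by-coordinate by majority vote within the bin of samples sharing pointer $i$; sparseness of the secrets (all Bernoulli with rate $n^{-0.5}$) is what makes the residual bias nontrivial. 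On the hardness side, the reduction goes through \emph{decisional} LPN with Bernoulli-distributed secrets (via search-to-decision equivalence and the Applebaum et al.\ variant): a weak unimodal learner yields a distinguisher, which is the natural object here --- converting an inverse-polynomial-advantage predictor directly into the LPN \emph{inverter} of your step (iv) is not immediate, and your proposal does not say how. Without the planted error and pointer, the sparse-secret bias argument, the separation of the channel secret from the learnable concept, and the decision-version reduction, the proposal as written does not yet yield the theorem.
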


\noindent For simplicity, we prove an average-case computational separation between bimodal and unimodal learning. In the context of a separation, this only strengthens the result, as any separation involving bimodal data applies to the multimodal versus unimodal setting.

Low-noise LPN is a relatively natural hardness of learning assumption. However, the bimodal learning task (and corresponding unimodal task) that we construct to prove the separation is pathologically constructed given the assumption (in Section \ref{sketches} we present a sketch of the idea). Therefore, it makes sense to ask: do there exist more natural bimodal learning tasks that constitute average-case computational separations? Indeed, this question was left open by \cite{lu2023computational} even in the context of worst-case separations.

Towards an answer, we look to find the \textit{minimal} computational hardness needed to construct an average-case computational separation between bimodal and unimodal learning. In doing so, we hope to reveal the core computational problem at the center of a separation between multimodal and unimodal learning, so that we can then understand whether it might be frequently encountered in practice.

In this vein, our second result says that to obtain \textit{any} average-case computational separation, we \textit{must} assume enough computational hardness to construct \textit{cryptographic key agreement} protocols. In fact, we construct an explicit key agreement protocol based on any given hypothesized average-case computational separation.\footnote{This is a so-called ``win-win'' result, which may be of independent interest: either secure key agreement protocols exist, or typical instances of unimodal learning tasks can be learned without significantly more computation than the multimodal task.}

\begin{theorem}[Informal]\label{intro:informalmain}
    For any given average-case bimodal learning task that can be completed in polynomial time, such that the corresponding unimodal task cannot be completed in polynomial time, there exists a corresponding cryptographic key agreement protocol.
\end{theorem}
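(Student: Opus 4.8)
The plan is to show that the two ingredients of a hypothesized separation---a polynomial-time \emph{bimodal} learner and the intractability of the corresponding \emph{unimodal} task---are exactly what is needed to instantiate and prove secure a key-agreement protocol: the legitimate receiver is put in the position of the efficient bimodal learner, while any eavesdropper is confined to the position of the hard unimodal learner. Correctness of the protocol will rest on the bimodal learner, and security on a reduction from the hardness of the unimodal task.

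First I would fix a clean abstraction of an average-case separation in Lu's model: a probabilistic polynomial-time sampler producing a hidden concept, a bimodal training sample (each example carrying a first-modality part, a second-modality part, and a label), and a test challenge (with a first-modality part and true label $y^\ast$); an efficient learner $\mathsf{Learn}$ that, given the full bimodal sample and the challenge's first-modality part, outputs $y^\ast$ with probability noticeably above the trivial baseline; and the assumption that no efficient algorithm does comparably given only the first-modality parts of the sample and challenge. Read this way, the second-modality data is exactly the side information that turns an intractable prediction problem tractable---morally a trapdoor---and a key-agreement protocol amounts to handing this trapdoor to the receiver but not to the eavesdropper. Concretely, the generator (Alice) runs the sampler, broadcasts the first-modality data while keeping the second-modality data private, and the intended key is $y^\ast$. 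Since a first-modality-only view is useless for $y^\ast$ by hypothesis, the entire content of the construction is the step that conveys the second-modality advantage to Bob alone: here one would use Bob's private randomness as a handle, the ``basic conditions'' constraining the shape of the second modality (e.g.\ that it is short and re-randomizable, or that a fresh instance can be interactively re-sampled with its trapdoor split across the parties), and one-way functions---which the unimodal hardness assumption yields by standard arguments---so that the trapdoor reaches Bob in a form that is pseudorandom to anyone lacking Bob's committed randomness but that Bob can strip off.

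Granting that step, the two remaining parts---correctness and security---are essentially routine. For correctness, Bob assembles a correctly distributed bimodal view, runs $\mathsf{Learn}$, and recovers $y^\ast$; running many independent instances in parallel, together with standard information reconciliation and privacy amplification (after first hardness-amplifying the gap between the two learners, should its quantitative form be weak), turns the merely noticeable per-instance success probability into overwhelming agreement on a shorter uniform key. For security, given an eavesdropper that predicts the agreed key with non-negligible advantage, I build an efficient unimodal learner that simulates the public transcript from the eavesdropper's view---Alice's broadcast is syntactically a fresh unimodal instance, so we plant there the one we are given, and the remaining messages are simulable because, without Bob's secret randomness, the trapdoor-carrying message is computationally indistinguishable from uniform while the rest is generated honestly---so the eavesdropper's prediction of the key becomes a prediction of $y^\ast$ from first-modality data alone, contradicting the assumption; a short hybrid argument bridges ``non-negligible'' and ``negligible.''

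I expect the conveyance step to be the main obstacle. Because the transcript is fully public, moving the ``second-modality advantage'' to Bob without also moving it to the eavesdropper is delicate, and ensuring that the mechanism does not covertly presuppose a stronger primitive---oblivious transfer, or a Diffie--Hellman-type operation, either of which would already imply key agreement---is exactly where a careful formalization of Lu's ``basic conditions'' must do real work. Extracting from those conditions a clean, reasonably model-independent construction, rather than one bespoke to a particular shape of bimodal task, is the part of the argument I would expect to be most technically involved.
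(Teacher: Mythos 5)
There is a genuine gap, and you have in fact located it yourself: the ``conveyance step'' is not a routine detail to be granted but the entire problem, and the mechanism you sketch for it does not work. Transmitting the trapdoor (the privileged modality) to Bob over a public authenticated channel ``in a form that is pseudorandom to anyone lacking Bob's committed randomness but that Bob can strip off'' is precisely a public-key/key-agreement functionality; one-way functions (which is all that unimodal hardness gives you by standard arguments) are not known to yield it, and black-box separations (Impagliazzo--Rudich) indicate they cannot. So as written, the heart of your construction either presupposes the primitive you are trying to build or is left open, and correctness/security cannot be ``essentially routine'' because there is no concrete protocol to analyze.

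The paper avoids the conveyance problem entirely by reversing the roles: no trapdoor ever crosses the channel. Alice, the party who samples the data, keeps the privileged modality $\cX$ private simply by never sending it; she transmits only the unlabeled second-modality points $y_i = \phi[x_i]$. Bob's secret is a single bit $b_B$: if $b_B=1$ he labels the $y_i$ honestly by sampling $\psi \sim \zeta$ (which he can do from the transmitted $y_i$ alone, since $\zeta$ acts on $\cY$ and is polynomial-time samplable --- these samplability requirements are the paper's ``basic conditions''), and if $b_B=0$ he sends uniformly random labels. Alice then runs the \emph{bimodal} learner on the triples $(x_i,y_i,z_i)$ --- she has the $x_i$ because she generated them --- and decodes $b_B$ by testing whether the learned hypothesis predicts a held-out label $z_{k+1}$; so the agreed bit is Bob's choice bit, not a test label $y^\ast$ as in your plan. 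An eavesdropper sees only unimodal $(\cY,\cZ)$ data, and the security proof converts any distinguishing advantage into an efficient unimodal learner via a hybrid argument over the $k+1$ labels plus a Yao-style distinguishing-to-prediction reduction and constructive averaging, contradicting the assumed unimodal hardness. This is the Pietrzak--Sj\"odin weak-PRF-to-KA template, and it is exactly the structural idea your proposal is missing: make the sampler herself the bimodal learner, so the asymmetry you need is born private rather than needing to be transmitted.
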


% $\alpha \phi$

\noindent Cryptographic key agreement (KA)---where two parties communicate over an authenticated but insecure channel in order to jointly agree on a \textit{secret} key---is one of the fundamental tasks of cryptography.\footnote{For example, KA is fundamental to  the Transport Layer Security (TLS) protocol for facilitating secure communication over the internet.}  
The existence of KA protocols is known to be equivalent to the existence of public key encryption schemes, and other exotic cryptographic primitives (see e.g. \cite{impagliazzo1995personal} for more information). 
% Thus, Theorem \ref{intro:informalmain} proves that average-case computational separations are only possible when many exotic cryptographic primitives also exist.

\paragraph{Interpreting Theorem \ref{intro:informalmain}.} To apply Theorem \ref{intro:informalmain} to the question of whether their exist more natural average-case computational separations, we suggest the following perspective. Although Theorem \ref{intro:septhm} gives good evidence that super-polymomial average-case computational separations do \textit{exist}, Theorem \ref{intro:informalmain} shows that \textit{any} such separation may not be very ``natural,'' since it needs to be sufficiently ``cryptographic.'' That is, it can be directly used to construct exotic cryptographic primitives. Arguably, ``cryptographic'' data distributions rarely come up in practice, where data is generated from natural processes instead of the precise design of a cryptographer. Hence, we suggest that \textbf{super-polynomial} \textit{computational} advantages of multimodal learning may arise infrequently in practice. Our interpretation seems to contradict the results of practical studies, however our interpretation does not apply to \textit{statistical} advantages of multimodal learning (i.e., less data needed). In fact, even the separation from Theorem \ref{intro:septhm} does not hold in the statistical regime, since LPN can be solved in $2^{o(n)}$ time even with $\poly(n)$ samples \cite{lyubashevsky2005parity}. This would explain why multimodal learning continues to succeed in practice: the advantages are typically statistical, not computational.
% See section \ref{section:conclusion} for a continued discussion.

\paragraph{On polynomial separations.}
Since Theorem \ref{intro:informalmain} derives a cryptographic KA protocol from any given \textit{super}-polynomial separation, we suggest that super-polynomial separations are infrequent in practice. However, polynomial separations (e.g. quadratic computational advantage) may still be relevant in certain practical settings despite not being as totally debilitating as super-polynomial separations. Indeed, a polynomial separation does not necessarily imply (by Theorem \ref{intro:informalmain}) a KA protocol with cryptographic security (i.e., security against all polynomial time adversaries). Therefore, we (conservatively) refrain from suggesting that polynomial separations are unlikely to occur in practice. That being said, \textbf{our proof is general enough to show that any given polynomial separation does still imply a corresponding polynomial-security KA protocol.}\footnote{See the proof of Theorem \ref{theorem:securityBA} for details.} Therefore, a more aggressive interpretation could argue that even large polynomial separations (e.g. a quartic computational advantage) are pretty unlikely to occur in practice.

\paragraph{On low-noise LPN.}
    Theorem \ref{intro:informalmain} also provides a hint for why we do not achieve a separation as in Theorem \ref{intro:septhm} by using the weaker \textit{standard} LPN assumption, where the noise rate $\theta$ can be taken to be any constant fraction less than one half, and the secret parity is uniformly random. Indeed, the standard LPN assumption is not known to imply any form of KA, unlike the low-noise variant. Constructing KA from the standard LPN assumption is a major open problem in the theory of cryptography.

%%% Full version %%%

% Additionally, we remark that, by viewing Theorem \ref{intro:informalmain} in the contrapositive, we obtain a very natural algorithmic benefit of living in \textit{Minicrypt}, Impagliazzo's world where some cryptographic primitives exist (e.g. zero-knowledge proofs, digital signatures), but more advanced \textit{Cryptomania} primitives like KA do not exist. The algorithmic benefit is that average-case learning tasks are not necessarily harder with unimodal data than they are with multimodal data. At the moment, relatively few algorithmic benefits of life in \textit{Minicrypt} are known at all.

%%%

\subsection{Our Constructions: the Main Ideas}\label{sketches}

% \paragraph{Average-case computational separation.}

\paragraph{Theorem \ref{intro:septhm}.} Loosely speaking, the main idea behind the construction of our average-case computational separation is to use the LPN assumption to obtain a very strong \textit{heterogeneity} between the available data modalities. Heterogeneity is a notion concerning multimodal data studied by \cite{lu2023computational}, which he identifies as a fundamental aspect of multimodal learning.

Intuitively, the heterogeneity property (in bimodal learning) is that the two modalities somehow complement each other, so that seeing data from both modalities is significantly better than from just one. Indeed, if the two modalities are very similar, then adding data from the second modality is redundant (as an extreme case, consider when datapoints are sampled identically across all modalities). 
% Hence, the heterogeneity property implies that a useful second modality should contain information that is somehow hard to derive from the first modality (even given labels of that data). 

To construct a \textit{computational} separation, it is clear that the useful second modality should contain information that is \textit{hard to compute} given information in the first modality. Thus, our main idea is to use the trapdoor properties of the LPN assumption to construct a distribution over a modality $\cY \subseteq \real^n$, such that samples from that distribution hide all information about the corresponding sample from a modality $\cX \subseteq \real^n$ (with respect to efficient computation). In cryptographic terms, the mapping from $\cX$ to $\cY$ is \textit{distributionally one-way}. Since the mapping is one-way, there still exists a learnable \textit{connection} (a mapping) from $\cY$ to $\cX$. This one-way connection is an instance of the \textit{connection} property identified by \citet{lu2023computational} as essential to the existence of an advantage from learning with multimodal data.

For our separation, we define the joint data distribution over the two modalities so that the first modality consists of a low-noise LPN instance, and the useful second modality is the parity function that underlies the LPN instance. Obviously, given the LPN assumption, the useful second modality is hard to compute given samples from first. This gives a strong heterogeneity property in a formally justified way.

Simultaneously, we must define the joint distribution over the two modalities so that they can be used to actually learn from the labelled data. Our method to handle this involves injecting hidden data into the first modality which are only recovered given the second modality. Our method is inspired by the ideas behind the low-noise LPN-based public key encryption schemes of \cite{alekhnovich2003more}, and uses key ideas from the Covert Learning algorithm for noisy parities of \cite{canetti2021covert}.

\paragraph{Theorem \ref{intro:informalmain}.} 

We have described how an average-case computational separation needs some form of \textit{distributional one-wayness} between the two modalities. This is required because otherwise either modality could be efficiently sampled given the other, making a reduction from unimodal to bimodal learning feasible.

Distributional one-wayness is a standard notion in cryptography, equivalent to the more fundamental notion of one-wayness \citep{impagliazzo1990no}. In proving Theorem \ref{intro:informalmain}, we show that cryptographic key agreement, a cryptographic primitive thought to be (much) stronger than one-way functions, must be an essential aspect of constructing an average-case computational separation.

Our construction of KA exploits the fact that data sampled from the unimodal task is hard to learn from, unless one also has the the corresponding data from the second modality. Towards KA, it suffices to construct a \textit{bit agreement} protocol, before invoking standard techniques from cryptography to obtain a KA protocol for long keys (see Section \ref{apx:BA} for more information). The first player in the bit agreement protocol (called ``Alice'') uses the average-case computational separation to sample unlabeled bimodal data, and then send only the unimodal data to the second player (called ``Bob''). Bob picks a uniformly random bit $b_B$ (which is the bit he wants Alice to agree with), and if $b_B = 1$, labels the data by a concept sampled according to the multimodal learning task and sends it back, and if $b_B = 0$ responds with uniformly random labels. Alice, given the data labels, applies the multimodal learning algorithm to obtain a hypothesis function. Roughly speaking, Alice can decode Bob's bit with good probability because the accuracy of the hypothesis function resulting from her execution of the multimodal learning algorithm is only good when $b_B = 1$. Finally, the protocol can be shown secure because any polynomial time adversary who, given a view of the protocol, can predict $b_B$ with any probability significantly better than 1/2, can be used to contradict the assumption that the unimodal learning task is hard in polynomial time.

Our bit agreement protocol uses similar ideas to the protocol of \citet{pietrzak2008weak}, who use the existence of a so-called secret-coin weak pseudorandom function to construct KA. However, our analysis is significantly more involved than that of \citet{pietrzak2008weak} since we need to show that an adversary for the protocol implies a unimodal learning algorithm rather than a successful weak pseudorandom function adversary, which is weaker.

% \paragraph{Remark.}
% We note that constructing a KA protocol, as we do, from a worst-case computational separation like that of \cite{lu2023computational} is \textbf{not} possible, since KA requires randomization of the messages between the players.

\subsection{Related Work}\label{section:related}

Few theoretical results on multimodal learning are known at the moment. For those that exist, they consider certain limited scenarios. For example, works of \citet{yuhas1989integration, sridharan2008information, amini2009learning, federici2020learning} consider a situation of learning from multimodal data, but where learning is still possible from each mode individually. This is the so-called \textit{multi-view} setting, and does not produce theoretical justification for any computational separation afforded by access to multimodal data.

On another note, \cite{huang2021makes} study advantages in generalization when learning common latent representations of multimodal data, but not predictors.
Additionally, other works like \citet{yang2015auxiliary} and \citet{sun2020tcgm} make strong distributional assumptions about the data that is sampled from multiple modalities. It is not clear whether those assumptions hold in practice.

Empirically, deep learning from multimodal data has had great success, for example in learning massive general agents like GPT-4 \citep{achiam2023gpt}, Gemini \citep{team2023gemini} and Gato \citep{reed2022generalist}. Also, deep learning for modality \textit{generation} has worked well (e.g. \citet{reed2016generative} for text to image). It is often observed that ML models derived from multimodal data perform better than even fine-tuned models derived from unimodal data. 

\section{Technical Overview}\label{section:tech}
Before proving Theorem \ref{intro:septhm} and \ref{intro:informalmain} in the next sections, we begin by introducing the model for (average-case) multimodal learning, as well as the notions of computational separations.

\subsection{Bimodal Learning}

We follow the model for bimodal learning of \cite{lu2023computational}. In a formal bimodal learning task, two modalities, denoted by $\cX, \cY \subseteq \real^n$, and a label space $\cZ$, form the basis for the selection of datapoints $(x,y,z) \in \cX \times \cY \times \cZ$. In the bimodal PAC-learning task, selection of a dataset consisting of $m$ datapoints abides by a data distribution $\rho$ over $\cX \times \cY \times \cZ$. For an accuracy and confidence $\epsilon, \delta \in (0,1)$, the goal of a PAC-learning algorithm $A$ is to process this dataset generated by $\rho$ so as to generate a hypothesis function $h: \cY \rightarrow \cZ$ that achieves population risk below $\epsilon$, \textit{on the unimodal task of labelling elements of $\cY$ with labels of $\cZ$} (and without loss of generality, labelling elements of $\cX$):
\[
\ell_{\rm pop}(h)\triangleq \Ex{(x,y,z) \sim \rho}{\ell(h,y, z)}\le \epsilon\]
with probability at least $1-\delta$, for some loss function $\ell$. For example, $\ell_{0-1}(h, y, z) \triangleq \mbf{1}[h(y) \not= z]$ (0-1 loss).

In Section \ref{section:separation}, we also consider $\ell_0(h, y, z) \triangleq \frac{1}{|z|}|\{i : \mbf{1}[h(y)_i \not= z_i]\}|$ when $z$ is not a single bit.
The algorithm $A$ is considered efficient if it runs in polynomial time in the parameters $1/\epsilon, 1/\delta$ and $n$.

\subsection{Relationship Between Modalities} \cite{ lu2023computational} defined the bimodal PAC-learning task so that there must exist a (unknown) bijection between $\cX$ and $\cY$ defined for any $(x,y,z)$ in the support of $\rho$. In this work we generalize this so that there only exists a (probabilistic) mapping between $x$ and $y$---we consider this a more practical assumption since many correspondences arising in bimodal learning in practice do not have bijective (or even functional) relationships. For example, consider that a single caption may have many images that it describes, and a single image may have many captions that describe it. 
Furthermore, for any caption, the paired image sampled by the data distribution may be chosen at random from the set of possible images defined by the mapping. 

\paragraph{Probabilistic mappings.} Formally, we represent a unidirectional probabilistic mapping from a set $S$ to a set $T$ by a function $\phi: S \rightarrow [0,1]^{|T|}$, subject to the constraint that the sum over all $s \in S$ of $\phi(s) =1$.
The mapping defined by the function $\phi$ maps an element $s$ to element $t_i \in T$ with probability $\phi(s)_i$. We write $\phi[s]$ to denote a sample from $T$ according to the distribution $\phi(s)$. Frequently, we will  define a probabilistic mapping $\phi$ by explicitly defining the distribution $\phi[s]$ for all $s \in S$, as it is conceptually easier to define.

\subsection{Unimodal Learning} When $\cX, \cY, \cZ$ are clear from the context, we identify a bimodal PAC-learning problem by $\rho$. Arising from a bimodal PAC-learning problem $\rho$ are unimodal PAC-learning problems $\rho_{\cX,\cZ}$ and $\rho_{\cY,\cZ}$. Here, $\rho_{\cX,\cZ}$ and $\rho_{\cY,\cZ}$ denote the distribution $\rho$ over $\cX \times \cY \times \cZ$ projected to $\cX \times \cZ$ and $\cY \times \cZ$ respectively. In unimodal PAC-learning, the task is defined analagously to the bimodal task: the goal of the learning algorithm $A$ for $\rho_{\cY,\cZ}$ (w.l.o.g.) is to produce a hypothesis $h$ such that
\[
\ell_{\rm pop}(h)\triangleq \Ex{(y,z) \sim \rho_{\cY,\cZ}}{\ell(h,y)}\le \epsilon
\]
with probability greater than $1-\delta$ for some loss function $\ell$.

\subsection{Average-case Bimodal Learning}

In this work, we will primarily consider an average-case notion of bimodal and unimodal learning. Let $\Delta(S)$ denote the convex polytope over all distributions over a set $S$. In the average-case notion of bimodal learning, we assume that the bimodal learning task is sampled according to a meta-distribution $\mu$ over $\Delta(\cX \times \cY \times \cZ)$. This is consistent with the ``Bayesian view'' of the PAC-learning task, where the learner is assumed to have some prior over the possible data distributions. When $\cX \times \cY \times \cZ$ is clear from the context, an average-case bimodal learning problem is identified with $\mu$.

More specifically, we consider the meta-distribution $\mu$ to be of the following natural form. Let $\chi$ be a fixed distribution over the first modality $\cX$. Let $\eta$ be a distribution over a set of probabilistic mappings $\phi: \cX \rightarrow [0,1]^{|\cY|}$ that transform elements of the first modality $\cX$ to elements of the second modality $\cY$. Finally, let $\zeta$ be a distribution over a set of probabilistic mappings $\psi: \cY \rightarrow [0,1]^{|\cZ|}$. The meta-distribution $\mu$ selects a data distribution $\rho$ by sampling $\phi \sim \eta$ and $\psi \sim \zeta$; the data distribution $\rho$ thus samples a datapoint by sampling $x \sim \chi$, and returning $(x, \phi[x], \psi[\phi[x]])$. We write $\mu = (\chi, \eta, \zeta)$ to be explicit about such average-case bimodal learning tasks.

% Specifically, we will consider a natural format for the meta-distribution $\mu$. We assume the fact that $\mu$ is a product distribution $(\mu_1, \mu_2)$, where $\mu_1$ is a distribution over $\Delta(\cX \times \cY)$, $\mu_2$ is a distribution over $\Delta(\cZ)$, and $\mu$ is sampled by drawing from $\mu_1$ and $\mu_2$ independently.

% Let $\cX = \cY = \{0,1\}^n$ and $\cZ = \{0,1\}$. Consider for example the following distribution $\rho$, which is defined by a matrix $\textbf{A} \in \ints_2^{n \times n}$ and a vector $\textbf{w} \in \ints_2^{n}$.
% A datapoint $(x,y,z)$ is sampled by drawing the unlabeled point $x$ from the first modality uniformly at random over $\cX$. Then the unlabeled datapoint $y$ is sampled from the second modality by computing $y' = \textbf{A}x$, and negating each bit in $y'$ with probability $1/\sqrt{n}$. Finally, the label $z$ is computed by taking 

\subsection{Average-case Computational Separations}

Let us now formalize what is an average-case computational separation in multimodal learning.

\begin{definition}\label{def:avgsep}
    We say that an average-case bimodal learning task $\mu$ (with respect to a loss function $\ell$) is a super-polynomial \textit{computational separation} if it holds that:
\begin{itemize}
    \item There exists a polynomial $p: \nat \rightarrow \nat$ such that there is a time $p(n)$ probabilistic algorithm $A$ such that, when $\rho \sim \mu$, and given access to $p(n)$ datapoints sampled according to $\rho$, $A$ outputs a hypothesis that achieves population risk $\ell_{\rm pop}(h) \le 1/2-1/p(n)$ with probability $1/p(n)$ over $\mu, \rho$ and randomness of $A$.
    \item For $\rho_{\rm uni} \in \{\rho_{\cX,\cZ}, \rho_{\cY,\cZ}\}$: For every polynomial $t: \nat \rightarrow \nat$, and every probabilistic algorithm $A$ running in time $t(n)$, when $\rho \sim \mu$, and $A$ is given access to $t(n)$ datapoints sampled according to $\rho_{\rm uni}$, $A$ outputs a hypothesis such that $\ell_{\rm pop}(h) > 1/2-1/t(n)$ in the unimodal task $\rho_{\rm uni}$ with probability at least $1-1/t(n)$ over $\mu, \rho_{\rm uni}$ and randomness of $A$.
\end{itemize}
\end{definition}

% Note that, one-sidedness comes from the fact that while unimodal PAC-learning $\rho_{\cY,\cZ}$ (sampled according to $\mu$) is hard, we do not require this to be true for $\rho_{\cX,\cZ}$. In the context of constructing a key agreement protocol as we do later, using a one-sided super-polynomial computational separation is a weaker assumption than the analogous definition for a two-sided separation.

We note that the separation only requires population risk $\ell_{\rm pop}(h) \le 1/2-1/p(n)$ for the bimodal case, and $\ell_{\rm pop}(h) > 1/2-1/t(n)$ for the unimodal case (with high probability). Again, this makes our construction of KA a \textit{stronger} result.

On the other hand, when we construct the separation in section \ref{section:separation}, we get a difference in population risk that is optimally large. The learning algorithm for the bimodal task achieves $\ell_{\rm pop}(h) \le n^{-0.5}$ (which is optimal), while we prove hardness of achieving $\ell_{\rm pop}(h) < 1/2-1/t(n)$ for any polynomial 
$t: \nat \rightarrow \nat$ in the unimodal task.

\subsection{Relationship to LUPI}

The model for multimodal PAC learning defined by \citet{lu2023computational}, which is used in this paper, bears resemblance to the Learning Using Privileged Information (LUPI) paradigm of \citet{vapnik2009new}. In fact, for the bimodal case, the two models of learning are the same (we omit formal proof, which follows immediately by definition). 

Both LUPI and bimodal learning consider triplets of information (rather than the standard of pairs). These triplets are given as input to the learning algorithm at training time, while at test time, only pairs are received. In this way, the two models consider situations where multiple modalities are accessible for training machine learning models, and may be effective for learning problems that act on just a single modality at test time. In the case of LUPI, the second modality is motivated by the presence of a ``teacher.'' The ``teacher'' can try to ease the learning process by giving the learner some auxiliary information about the data. In the case of multimodal learning, the additional modalities are motivated by the contemporary success of multimodal perception in AI, where there is an abundance of data but not necessarily a ``teacher.''

Since the LUPI paradigm of \citet{vapnik2009new} is the same as the bimodal learning model of \citet{lu2023computational}, both our main results (Theorem \ref{intro:septhm} and \ref{intro:informalmain}) apply to the LUPI paradigm. Previous work \citep{vapnik2009new, lapin2014learning, vapnik2015learning} on understanding the LUPI paradigm (e.g. proving upper bounds, lower bounds, and separations) focused on statistical learning settings such as empirical risk minimization (ERM). To our knowledge, our results are the first to consider the average-case computational power of the LUPI paradigm.

\section{Average-Case Separation}\label{section:separation}

In this section, we construct a super-polynomial computational separation, assuming hardness of the $t-{\rm LPN}_{n^{-0.5}, n}$ problem. We recall that an average-case multimodal learning problem $\mu = (\chi, \eta, \zeta)$ consists of:
\begin{itemize}
    \item $\chi$: a distribution over the modality $\cX$.
    \item $\eta$: a distribution over probabilistic mappings that transform $\cX \rightarrow \cY$.
    \item $\zeta$: a distribution over probabilistic mappings that transform $\cY \rightarrow \cZ$.
\end{itemize}

In order to construct our separation, we will only need to define $\eta$ so that it places the entire probability mass on a single probabilistic mapping $\phi: \cX \rightarrow [0,1]^{|\cY|}$. However, because we prove a separation (i.e., a ``negative'' result), this only strengthens the result.

\subsection{Construction of Separation}\label{subsection:construction_separation}

Let $\mathrm{Ber}(m)$ denote the Bernoulli random variable with mean $m \in [0,1]$.

Consider the following average-case multimodal learning problem $\mu = (\chi, \eta, \zeta)$. The modalities are $\cX = \ints_2^{1\times n} \times [n]$, $\cY = \ints_2^{n \times n} \times \ints_2^{1\times n}$, and $\cZ = \ints_2^{n \times 1} \times \ints_2$. All sums are computed modulo 2.
\begin{itemize}
    \item $\chi$: Sample uniformly random $i \in [n]$, and $\mbf{x} \in \ints_2^{n\times 1}$ where $\mbf{x}_j$ is sampled i.i.d. from $\mathrm{Ber}(n^{-0.5})$. Output $(\mbf{x}, i)$.
    \item $\eta$: With probability 1, output the probabilistic mapping $\phi: \cX \rightarrow [0,1]^{|\cY|}$. We define 
    \[
    \phi[(\mbf{x}, i)] = (\mbf{A}, \mbf{x}\mbf{A} + \mbf{b} + \mbf{e}^{(i)})
    \]
    where $\mbf{A} \in \ints_2^{n \times n}$ is a uniformly random, $\mbf{b} \in \ints_2^{1\times n}$ is such that $\mbf{b}_i$ is sampled i.i.d. from $\mathrm{Ber}(n^{-0.5})$, and $\mbf{e}^{(i)} \in \ints_2^{1\times n}$ is defined so that $(\mbf{e}^{(i)})_j = 1$ if and only if $j = i$.
    \item $\zeta$: Sample probabilistic mapping $\psi_{\mbf{w}}: \cY \rightarrow [0,1]^{|\cZ|}$ by sampling random vector $\mbf{w} \in \ints_2^{n \times 1}$ such that $\mbf{w}_i$ is sampled i.i.d. from $\mathrm{Ber}(n^{-0.5})$. We define 
    \[
    \psi_{\mbf{w}}[(\mbf{Y}, \mbf{y})] = (\mbf{Y}\mbf{w} + \mbf{b}', \mbf{y}\mbf{w} + \mbf{b}'')
    \]
    where $\mbf{b}' \in \ints_2^{n\times 1}$ is such that $\mbf{b}'_i$ is sampled i.i.d. from $\mathrm{Ber}(n^{-0.5})$. The bit $\mbf{b}''$ is also sampled i.i.d. from $\mathrm{Ber}(n^{-0.5})$.
\end{itemize}

\begin{theorem}[Separation]\label{thm:sep}
    Under the $\poly{\rm -LPN}_{\theta, n}$ assumption for for $\theta \triangleq n^{-0.5}$, the multimodal learning task $\mu = (\chi, \eta, \zeta)$, as defined above, is a super-polynomial computational separation. 
\end{theorem}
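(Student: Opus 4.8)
The plan is to verify the two clauses of Definition~\ref{def:avgsep}: an efficient bimodal learner, and hardness of both induced unimodal problems $\rho_{\cX,\cZ}$ and $\rho_{\cY,\cZ}$.

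\paragraph{The bimodal learner.} The engine is the identity, valid on every datapoint $\big((\mbf{x},i),(\mbf{A},\mbf{y}),(\mbf{z}_1,z_2)\big)$ in the support of $\rho$,
\[
z_2 + \langle\mbf{x},\mbf{z}_1\rangle \;=\; \mbf{w}_i \;+\; \langle\mbf{x},\mbf{b}'\rangle+\langle\mbf{b},\mbf{w}\rangle+\mbf{b}'',
\]
obtained by substituting $\mbf{z}_1=\mbf{A}\mbf{w}+\mbf{b}'$, $\mbf{y}=\mbf{x}\mbf{A}+\mbf{b}+\mbf{e}^{(i)}$ and $z_2=\mbf{y}\mbf{w}+\mbf{b}''$. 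Conditioned on $\mbf{w}$ the ``noise'' term is independent across datapoints, and since a parity of $k$ independent $\mathrm{Ber}(n^{-0.5})$ bits has bias of order $(1-2n^{-0.5})^{k}$ while $|\mathrm{supp}(\mbf{w})|,|\mathrm{supp}(\mbf{b})|,|\mathrm{supp}(\mbf{x})|=O(\sqrt n)$ with overwhelming probability, this bias is bounded below by an absolute constant on all but a negligible fraction of datapoints. As $i$ is uniform on $[n]$, a sample of size $m=O(n\log n)$ contains $\Omega(\log n)$ datapoints for each value of $i$, so a coordinatewise majority vote on $z_2+\langle\mbf{x},\mbf{z}_1\rangle$ recovers $\mbf{w}$ exactly with probability $1-\mathrm{negl}(n)$; the learner then outputs $h(\mbf{A},\mbf{y})=(\mbf{A}\mbf{w},\langle\mbf{y},\mbf{w}\rangle)$, which disagrees with $(\mbf{z}_1,z_2)$ only where $\mbf{b}'$ or $\mbf{b}''$ equals $1$, so a Chernoff bound yields $\ell_{\rm pop}(h)\le n^{-0.5}(1+o(1))$.

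\paragraph{Hardness of $\rho_{\cX,\cZ}$.} This is information-theoretic: $\mbf{A}$ never appears in the $\cX$ modality, and the label of a test point $(\mbf{x}^*,i^*)$ is produced with a \emph{fresh} uniform $\mbf{A}^*$. Since $\mbf{w}$ and $\mbf{x}^*$ are nonzero except with probability $e^{-\Theta(\sqrt n)}$, both $\mbf{z}_1^*=\mbf{A}^*\mbf{w}+\mbf{b}'^*$ and $z_2^*=\langle\mbf{x}^*,\mbf{A}^*\mbf{w}\rangle+\langle\mbf{b}^*+\mbf{e}^{(i^*)},\mbf{w}\rangle+\mbf{b}''^*$ are statistically close to uniform \emph{and independent of $(\mbf{x}^*,i^*)$ and of the entire training sample} -- the within-datapoint correlation between the two label coordinates is irrelevant because a hypothesis never sees $\mbf{z}_1^*$ at test time. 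Hence every hypothesis, even an unbounded one holding $\mbf{w}$, predicts each of the $n+1$ label bits correctly with probability exactly $\tfrac12$, so $\ell_{\rm pop}(h)\ge \tfrac12-e^{-\Theta(\sqrt n)}>\tfrac12-1/t(n)$ always, for every polynomial $t$ and all large $n$.

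\paragraph{Hardness of $\rho_{\cY,\cZ}$.} This is the substantive case; I would reduce to the decisional form of low-noise LPN (which follows from Definition~\ref{intro:def:searchLPN}), first passing to a \emph{sparse-secret} instance by the standard secret/noise equivalence so that the hidden $\mbf{w}$ matches the construction. The target is to show that $N=\poly(t(n))$ samples of $\rho_{\cY,\cZ}$ are computationally indistinguishable from the same distribution with \emph{both} label coordinates replaced by uniform bits; on that latter distribution every hypothesis has population $\ell_0$-risk exactly $\tfrac12$, so a time-$t(n)$ learner attaining risk $\le\tfrac12-1/t(n)$ with probability exceeding $1/t(n)$ on $\rho_{\cY,\cZ}$ would give a $\poly(t(n))$-time distinguisher of advantage $\Omega(1/t(n))$ (run the learner on part of the sample, empirically test the returned hypothesis on the rest), contradicting $\poly$-LPN. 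The indistinguishability is a two-step hybrid. Replacing $\mbf{z}_1=\mbf{A}\mbf{w}+\mbf{b}'$ by a uniform string, once $z_2$ is already uniform, is a clean reduction to decisional LPN: the challenge supplies the uniform rows used as the $\mbf{A}^{(k)}$ and the LPN-or-uniform labels used as $\mbf{z}_1^{(k)}$, while $\mbf{y}^{(k)}=\mbf{x}^{(k)}\mbf{A}^{(k)}+\mbf{b}^{(k)}+\mbf{e}^{(i^{(k)})}$ is recomputed from $\mbf{A}^{(k)}$ and everything other than $\mbf{z}_1^{(k)}$ is independent of the secret. The \emph{other} step -- replacing $z_2=\langle\mbf{y},\mbf{w}\rangle+\mbf{b}''$ by a uniform bit while $\mbf{z}_1$ still carries $\mbf{w}$ -- is the main obstacle: $z_2$ is a further noisy linear equation in $\mbf{w}$ whose coefficient vector $\mbf{y}$ is correlated with the $\mbf{A}^{(k)}$ and has a sparse ``error'' part $\mbf{b}+\mbf{e}^{(i)}$, so it is not covered by plain decisional LPN, and a naive sample-by-sample hybrid fails because the global secret $\mbf{w}$ is shared, so the datapoints not yet rewritten cannot be produced without $\mbf{w}$. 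The route I would pursue exploits that $\mbf{y}$ is itself a (marginally uniform) noisy codeword $\mbf{x}\mbf{A}+\mbf{b}+\mbf{e}^{(i)}$, so that $\langle\mbf{y},\mbf{w}\rangle$ behaves like a Goldreich--Levin hardcore bit of the LPN one-way map $\mbf{w}\mapsto(\mbf{A},\mbf{A}\mbf{w}+\mbf{b}')$ evaluated at a fresh query, together with the fact that, conditioned on a value of $\mbf{y}$, the tuple $(\mbf{A}^{(k)},\mbf{x}^{(k)},\mbf{b}^{(k)},i^{(k)})$ can be resampled with exactly its true conditional law; turning this into a joint hardcore-bit argument over the $N$ datapoints (with the remaining $z_2$-coordinates already pseudorandom) is where the bulk of the work lies. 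Carried out carefully, the argument in fact delivers the sharp quantitative form stated after Definition~\ref{def:avgsep}: bimodal risk $\le n^{-0.5}$, unimodal risk $>\tfrac12-1/t(n)$ for every polynomial $t$.
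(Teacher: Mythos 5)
Your bimodal learner is the paper's Algorithm \ref{A1} in all essentials (the identity $z_2 + \mbf{x}\mbf{z}_1 = \mbf{w}_i + \mbf{x}\mbf{b}' + \mbf{b}\mbf{w} + \mbf{b}''$, binning by $i$, majority vote), and your constant-bias analysis is sound; the only quibble is quantitative ($O(n\log n)$ samples gives inverse-polynomial, not negligible, failure probability per coordinate, but Definition \ref{def:avgsep} does not care, and the paper simply takes $n^3$ samples). Your hardness argument for $\rho_{\cX,\cZ}$ is correct and in fact cleaner than the paper's: since $\mbf{A}$ is fresh and hidden, the label is statistically independent of $(\mbf{x},i)$ except on the negligible event $\mbf{w}=0$ or $\mbf{x}=0$, so the bound holds unconditionally, whereas the paper invokes LPN for this case.

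The genuine gap is the hardness of $\rho_{\cY,\cZ}$, which is the crux of the theorem and which you do not prove. Your route---replace both label coordinates by uniform bits via a two-step hybrid, then wrap a train-and-test distinguisher around a hypothetical learner---stalls exactly where you say it does: randomizing $z_2=\mbf{y}\mbf{w}+\mbf{b}''$ while the $\mbf{z}_1$'s still carry the shared secret $\mbf{w}$ is not an instance of plain decisional LPN (the coefficient vector $\mbf{y}$ is a noisy codeword of the visible $\mbf{Y}$, and a per-sample hybrid cannot simulate the untouched samples without $\mbf{w}$), and the Goldreich--Levin/hardcore-bit repair you gesture at is essentially the security proof of an Alekhnovich-type scheme---non-trivial, and explicitly left by you as ``the bulk of the work.'' The paper avoids this obstacle by a different reduction: the distinguisher samples $\psi_{\mbf{w}}$ \emph{itself} and never needs the labels to look random; instead it embeds the DLPN challenge (with the Applebaum et al.\ sparse-secret variant) into the \emph{data}, i.e.\ the challenge secret plays the role of the hidden first-modality vector $\mbf{x}$, each block of the challenge is interpreted as $(\mbf{Y},\mbf{y})$ (with the $\mbf{e}^{(i)}$ spike added), and the labels are computed honestly from the known $\mbf{w}$. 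A successful $\rho_{\cY,\cZ}$-learner, run on this simulated data and then empirically tested on held-out simulated samples, distinguishes the two cases: with a pseudorandom challenge the data is (up to a routine multi-secret hybrid) distributed as $\rho_{\cY,\cZ}$, while with a uniform challenge $\mbf{y}$ is decoupled from $\mbf{Y}$, the planted-coordinate structure disappears, and the task degenerates to predicting noisy sparse parities over uniform examples, which no efficient learner can do under the same assumption. So no joint label-randomization over samples sharing $\mbf{w}$ is ever required. As submitted, your proposal establishes the easy clause and one of the two unimodal cases, but the separation itself remains unproven without completing (or replacing) your hardcore-bit step.
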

\begin{proof}
    The statement follows immediately from Theorem \ref{thm:feasible_bimodal} and Theorem \ref{thm:unfeasible_unimodal}.
\end{proof}
We prove Theorems \ref{thm:feasible_bimodal} and \ref{thm:unfeasible_unimodal} in the following two sections.

\subsection{An Efficient Multimodal Learning Algorithm}

We begin by proving that there exists an efficient algorithm for the multimodal PAC-learning task defined by $\mu$. This is the part of the separation that shows the feasibility of the learning task given bimodal data.

Now, observe that, given samples of the form $(x,y,z) \sim \rho$ for $\rho \sim \mu$, the multimodal PAC-learning task is learned optimally by finding the vector $\mbf{w}$ underlying $\psi_\mbf{w}$. Hence, we now give an algorithm that finds $\mbf{w}$, and then outputs the optimal hypothesis.

\begin{algorithm}
  \caption{$A_\mu \ | \ \rho \sim \mu$}
  \label{A1}
  \begin{algorithmic}[1]
    \STATE \textbf{Input}: $n^3$ samples $(x,y,z) \sim \rho$.\\  \STATE \textbf{Output}: $h: \cX \times \cY \rightarrow \cZ$.

    \STATE Interpret each example $(x_j,y_j,z_j)$ as $((\mbf{x}_j, i_j), (\mbf{Y}_j, \mbf{y}_j), (\mbf{z}_j, z_j))$.

    \STATE Sort all examples $((\mbf{x}_j, i_j), (\mbf{Y}_j, \mbf{y}_j), (\mbf{z}_j, z_j))$ into $n$ bins labelled by $i \in [n]$ by value of $i_j$.

    \FOR{each bin $b_i$}

    \FOR{example $((\mbf{x}_j, i_j), (\mbf{Y}_j, \mbf{y}_j), (\mbf{z}_j, z_j))$ in bin $b_i$:}

    \STATE Compute $\alpha_{i,j} = \mbf{x}_j\mbf{z}_j + z_j$
    \ENDFOR
    \STATE Compute $\mbf{w}'_i$ by taking the majority vote over $\alpha_{i,j}$ for all $j$.

    \ENDFOR
    \STATE \textbf{Output} $h((\mbf{Y}, \mbf{y})) = (\mbf{Y}\mbf{w}', \mbf{y}(\mbf{w}')^{\sf T})$.
  \end{algorithmic}
\end{algorithm}

We prove that with high probability, the algorithm $A_\mu$ outputs $\mbf{w}' = \mbf{w}$, and this minimizes population risk for the PAC-learning task, with respect to $\ell_0$ loss.

\begin{theorem}\label{thm:feasible_bimodal}
    We have that
    \[
    \Pr_{A_\mu, \rho \sim \mu}[\ell_{\rm pop}(h) \le n^{-0.5} : h \leftarrow A_\mu] \ge 1-\exp(-\Omega(n))
    \]
    with respect to $\ell_0$ loss. Moreover, $A_\mu$ runs in time $\poly(n)$.
\end{theorem}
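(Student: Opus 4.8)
The plan is to show that Algorithm~$A_\mu$ recovers $\mbf{w}$ exactly with probability $1-\exp(-\Omega(n))$, and that the hypothesis $h(\mbf{Y},\mbf{y}) = (\mbf{Y}\mbf{w}, \mbf{y}\mbf{w}^{\sf T})$ achieves population risk at most $n^{-0.5}$ with respect to $\ell_0$. The second part is the easier half: conditioned on $\mbf{w}' = \mbf{w}$, the first component of the label $\psi_\mbf{w}[(\mbf{Y},\mbf{y})] = (\mbf{Y}\mbf{w} + \mbf{b}', \mbf{y}\mbf{w} + \mbf{b}'')$ differs from $h$'s prediction only in the coordinates where $\mbf{b}'$ is $1$, and each coordinate of $\mbf{b}'$ is $\mathrm{Ber}(n^{-0.5})$, so the expected fractional ($\ell_0$) disagreement on the $n$-dimensional block is exactly $n^{-0.5}$; the single bit $\mbf{b}''$ contributes negligibly. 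Averaging over the draw of $(x,y,z)\sim\rho$ and using linearity of expectation gives $\ell_{\rm pop}(h)\le n^{-0.5}$ deterministically once $\mbf{w}'=\mbf{w}$. (One should also note this is optimal, since no hypothesis can beat the noise floor.)

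The core of the argument is that the majority vote in each bin recovers the correct bit of $\mbf{w}$. First I would unwind what $\alpha_{i,j} = \mbf{x}_j \mbf{z}_j + z_j$ actually computes. We have $z_j = \mbf{y}_j \mbf{w} + \mbf{b}''_j$ where $\mbf{y}_j = \mbf{x}_j \mbf{A}_j + \mbf{b}_j + \mbf{e}^{(i)}$ (the example is in bin $i$, so $i_j = i$), and $\mbf{z}_j = \mbf{Y}_j \mbf{w} + \mbf{b}'_j = \mbf{A}_j \mbf{w} + \mbf{b}'_j$. So
\[
\mbf{x}_j \mbf{z}_j = \mbf{x}_j \mbf{A}_j \mbf{w} + \mbf{x}_j \mbf{b}'_j, \qquad z_j = \mbf{x}_j \mbf{A}_j \mbf{w} + \mbf{b}_j \mbf{w} + (\mbf{e}^{(i)}) \mbf{w} + \mbf{b}''_j,
\]
and the leading terms $\mbf{x}_j \mbf{A}_j \mbf{w}$ cancel mod $2$, leaving $\alpha_{i,j} = \mbf{w}_i + \big(\mbf{x}_j \mbf{b}'_j + \mbf{b}_j \mbf{w} + \mbf{b}''_j\big)$, since $(\mbf{e}^{(i)})\mbf{w} = \mbf{w}_i$. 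Thus $\alpha_{i,j}$ equals $\mbf{w}_i$ plus a "noise" bit $\xi_j := \mbf{x}_j \mbf{b}'_j + \mbf{b}_j \mbf{w} + \mbf{b}''_j$. I would then bound $\Pr[\xi_j = 1]$: each of $\mbf{x}_j, \mbf{b}_j, \mbf{b}'_j, \mbf{w}$ has i.i.d.\ $\mathrm{Ber}(n^{-0.5})$ entries, so $\mbf{x}_j \mbf{b}'_j = \sum_k (\mbf{x}_j)_k (\mbf{b}'_j)_k$ has each summand equal to $1$ with probability $n^{-1}$, giving expected weight $O(1)$, hence $\Pr[\mbf{x}_j\mbf{b}'_j = 1] \le $ some constant bounded away from $1/2$; similarly $\mbf{b}_j \mbf{w}$ is a sum of $n$ terms each $1$ w.p.\ $n^{-1}$. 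The cleanest route is to show $\Pr[\xi_j = 1] \le 1/2 - c$ for an absolute constant $c>0$ (one can use the fact that the XOR of independent near-deterministic-$0$ bits stays well below $1/2$, e.g.\ via the bias/Fourier formula: the bias of a sum is the product of biases, and here each contributing bit has bias $\ge 1 - 2n^{-1}$ or similar, with $O(n)$ of them, yielding total bias $\ge e^{-O(1)}$).

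Given $\Pr[\xi_j = 1] \le 1/2 - c$, the per-bin majority vote over the $\alpha_{i,j}$ returns $\mbf{w}_i$ correctly unless more than half the $\xi_j$ in that bin are $1$, which by a Chernoff bound fails with probability $\exp(-\Omega(N_i))$ where $N_i$ is the bin size. Since we draw $n^3$ samples and $i_j$ is uniform on $[n]$, each $N_i$ concentrates around $n^2$, so with probability $1 - \exp(-\Omega(n))$ every bin has $N_i = \Omega(n^2)$; then a union bound over the $n$ bins gives that all $n$ coordinates of $\mbf{w}'$ are correct except with probability $n \cdot \exp(-\Omega(n^2)) + \exp(-\Omega(n)) = \exp(-\Omega(n))$. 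The main obstacle is the noise-bias estimate: one must be careful that $\xi_j$ really is bounded away from $1/2$ (it is, because all noise sources are low-rate $n^{-0.5}$ and their bilinear/linear combinations have only $O(1)$ expected Hamming weight, so they are $0$ with constant probability) and that the $\xi_j$ within a bin are mutually independent across $j$ (they are, since each example carries fresh $\mbf{x}_j, \mbf{A}_j, \mbf{b}_j, \mbf{b}'_j, \mbf{b}''_j$; the only shared randomness $\mbf{w}$ is conditioned on, and conditioned on $\mbf{w}$ the bits $\mbf{b}_j\mbf{w}$ still have the required bias as long as $\mbf{w}\ne 0$, which holds w.h.p., and more carefully one observes the claimed bias bound holds for all but an $\exp(-\Omega(\sqrt n))$-fraction of $\mbf{w}$). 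Finally, combining with the population-risk computation of the first paragraph and noting all steps run in $\poly(n)$ time (sorting, $O(n^3)$ inner products, majority votes) completes the proof.
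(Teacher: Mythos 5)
Your proposal follows essentially the same route as the paper's proof: the same algebraic cancellation giving $\alpha_{i,j} = \mbf{w}_i + (\mbf{x}_j\mbf{b}' + \mbf{b}\mbf{w} + \mbf{b}'')$, a constant lower bound on the bias of the noise bit, Chernoff bounds on bin sizes and on the per-bin majority votes with a union bound over bins, and the observation that exact recovery of $\mbf{w}$ puts the population risk at the noise floor $n^{-0.5}$; the only cosmetic difference is that you bound the noise term via the product-of-biases formula where the paper does an explicit constant calculation. One place where you are in fact more careful than the paper: since the secret $\mbf{w}$ is shared across all examples, the relevant bias of $\mbf{b}_j\mbf{w}$ must be taken conditionally on $\mbf{w}$, and your own remark that the constant-bias bound only holds outside an $\exp(-\Omega(\sqrt{n}))$-probability set of heavy $\mbf{w}$ should then be carried into the final union bound, where it dominates and yields success probability $1-\exp(-\Omega(\sqrt{n}))$ rather than the stated $1-\exp(-\Omega(n))$ --- a discrepancy that traces to the paper's proof (which silently treats the unconditional bias as if it applied per bin), not to anything you added.
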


% \[\ell_{\rm pop}(h)\triangleq \Ex{(x,y,z) \sim \rho}{\mbf{1}[h(x,y) \not= z]} \le \epsilon
% \]

\begin{proof}
    The runtime of $A_\mu$ being $\poly(n)$ is immediate.
 
    To show that population risk is small, we need to show that:
    \begin{align*}
    \Pr_{A_\mu, \rho \sim \mu}\left[\Ex{(x,y,z) \sim \rho}{\ell_0(h,y)} \le 1/n^{0.5} : h \leftarrow A_\mu\right] \\ \ge 1-\exp(-\Omega(n))
    \end{align*}
Consider that if $A_\mu$ outputs hypothesis $h$ such that $\mbf{w}' = \mbf{w}$, where $\mbf{w}$ is the vector sampled by $\zeta$, then $h$ satisfies
\[
\Ex{(x,y,z) \sim \rho}{\ell_0(h,y)} \le 1/n^{0.5}
\]
Thus we will prove that $A_\mu$ finds $\mbf{w'} =\mbf{w}$ with probability at least $1-\exp(-\Omega(n))$ over the randomness of $\rho \sim \mu$ and the $n^3$ examples sampled from $\rho$ given as input to $A_\mu$.

To prove this, let us focus on bit $\mbf{w'}_i$, without loss of generality (the following argument is applies to all $i \in [n]$). The bit $\mbf{w'}_i$ is the majority vote of $\alpha_{ij} = \mbf{x}_j\mbf{z}_j + z_j$ for all examples $((\mbf{x}_j, i_j), (\mbf{Y}_j, \mbf{y}_j), (\mbf{z}_j, z_j))$ conditioned on $i_j = i$. Therefore, if 
\begin{align}\label{eq1}
\Pr_{((\mbf{x}_j, i_j), (\mbf{Y}_j, \mbf{y}_j), (\mbf{z}_j, z_j))}\left[\mbf{x}_j\mbf{z}_j + z_j = \mbf{w}_i | i_j = i\right] \\ \ge 1/2+ \Omega(1)
\end{align}
then $(\mbf{w}')_i = \mbf{w}_i$ with probability at least $1-\exp(-\Omega(n))$, given enough voter participation. Leaving aside the issue of number of votes, note that, by a union bound it would follow that $\mbf{w'} = \mbf{w}$ still with probability $1-\exp(-\Omega(n))$ as desired. We now show (\ref{eq1}), and leave the issue of lower bounding the number of votes (i.e., the number of examples in every bucket) for after.

To show (\ref{eq1}), we expand:
\begin{align*}
    \mbf{x}_j\mbf{z}_j + z_j &= \mbf{x}_j(\mbf{A}_j\mbf{w} + \mbf{b}') + (\mbf{x}_j\mbf{A}_j + \mbf{b}+\mbf{e}^{(i)})\mbf{w} + \mbf{b}''\\
    &= \mbf{x}_j(\mbf{A}_j\mbf{w}) + \mbf{x}_j\mbf{b}' + \mbf{x}_j(\mbf{A}_j\mbf{w}) + \mbf{b}\mbf{w}\\
    &\ \ \ \ +\mbf{e}^{(i)}\mbf{w} + \mbf{b}''\\
    &= \mbf{x}_j\mbf{b}' + \mbf{b}\mbf{w} +\mbf{e}^{(i)}\mbf{w} + \mbf{b}''
    % &= \mbf{A}_j\mbf{w}\mbf{x}_j + \mbf{x}_j(\mbf{b}')^{\sf T} + \mbf{A}_j^{\sf T}\mbf{x}_j\mbf{w}^{\sf T} + \mbf{b}\mbf{w}^{\sf T}+\mbf{e}^{(i)}\mbf{w}^{\sf T} + \mbf{b}''\\
    % &= \mbf{x}_j(\mbf{w}^{\sf T}\mbf{A}_j) + \mbf{x}_j(\mbf{b}')^{\sf T} + \mbf{A}_j\mbf{x}_j\mbf{w}^{\sf T} + \mbf{b}\mbf{w}^{\sf T}+\mbf{e}^{(i)}\mbf{w}^{\sf T} + \mbf{b}''\\
\end{align*}
And now we argue that for $n>4$, 
\[
\Pr\left[\mbf{x}_j\mbf{b}' + \mbf{b}\mbf{w} +\mbf{e}^{(i)}\mbf{w} + \mbf{b}'' = \mbf{w}_i\right] \ge 0.515
\]
To see this, observe that $\mbf{e}^{(i)}\mbf{w} = \mbf{w}_i$, so it suffices to show 
\[
\Pr\left[\mbf{x}_j\mbf{b}' + \mbf{b}\mbf{w} + \mbf{b}'' = 0\right] \ge 0.515
\]
Each term forming the sum inside the probability is an independent random variable. Thus, let us lower bound the probability that each of the three terms is 0. For the first term, we have that 
$\mbf{x}_j\mbf{b}' = \sum{i}^{n} (\mbf{x}_j)_i(\mbf{b}')_i$ and for each $i$, $\Pr[(\mbf{x}_j)_i(\mbf{b}')_i = 1]= 1/n$ (by definition of the sampling process, where $(\mbf{x}_j)_i$ and $(\mbf{b}')_i$ are 1 with probability $1/n^{0.5}$). Hence, for $n>4$, for all $i$, $(\mbf{x}_j)_i$ and $(\mbf{b}')_i$ are = 0, with probability at least $0.326$ (direct computation). Therefore, $\Pr[\mbf{x}_j\mbf{b}' = 0] \ge 0.5+ 0.326/2 \ge  0.663$.

The same argument and conclusion holds for the second term. For the third term, we know that $\Pr[\mbf{b}'' = 1] = 1/n^{0.5}$. Therefore, 
\[
\Pr\left[\mbf{x}_j\mbf{b}' = \mbf{b}\mbf{w} = \mbf{b}'' = 0\right] \ge 0.663^2 \cdot 0.9 \ge 0.395
\]
Also, 
\[
\Pr\left[\mbf{x}_j\mbf{b}' = \mbf{b}\mbf{w} = 1 \land \mbf{b}'' = 0\right] \ge 1/e^2 \cdot 0.9 \ge 0.12
\]
So we conclude that 
\[
\Pr\left[\mbf{x}_j\mbf{b}' + \mbf{b}\mbf{w} + \mbf{b}'' = 0\right] \ge 0.515
\]

Thus, if the number of examples in each bin $b_i$ is at least $n$, then by standard application Chernoff bounds (see Lemma \ref{chernoff}), the majority vote over all $\alpha_{i,j}$ used to compute $\mbf{w}'_i$ matches $\mbf{w}_i$, save for an event of exponentially small probability in $n$. Furthermore, another application of Chernoff and union bounds gives that the number of examples in every bin is at least $n$, save for a bad event that occurs with exponentially small probability in $n$. A final union bound concludes that $A_\mu$ finds $\mbf{w'} =\mbf{w}$ with probability at least $1-\exp(-\Omega(n))$ as desired.
\end{proof}
\begin{lemma}[Chernoff Bound, cf. Theorem 2.1 \cite{janson2011random}]\label{chernoff}
    Let $X \sim {\sf Bin}(m,p)$ and $\lambda = m \cdot p$. For any $t \ge 0$,
    \[
    \Pr[|X-\Ex{}{X}| \ge t] \le \exp\left(\frac{-t^2}{2(\lambda + t/3)}\right)
    \]
\end{lemma}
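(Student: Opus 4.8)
The plan is to prove the upper and lower tails separately by the exponential-moment (Chernoff--Cram\'er) method and then combine them. Write $X = \sum_{i=1}^m X_i$ with the $X_i$ i.i.d.\ $\mathrm{Ber}(p)$, so $\Ex{}{X} = \lambda = mp$. For any $u \ge 0$, Markov's inequality applied to $e^{uX}$ gives $\Pr[X - \lambda \ge t] \le e^{-u(\lambda + t)}\,\Ex{}{e^{uX}}$, and by independence $\Ex{}{e^{uX}} = (1 + p(e^u-1))^m \le \exp(\lambda(e^u - 1))$, using $1 + z \le e^z$. Hence $\Pr[X - \lambda \ge t] \le \exp(\lambda(e^u - 1 - u) - ut)$ for every $u \ge 0$.

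The key step is to pick $u$ well and bound the exponent. I would use the elementary inequality $e^u - 1 - u \le \frac{u^2/2}{1 - u/3}$, valid for $0 \le u < 3$ (it follows by comparing the power series $e^u - 1 - u = \sum_{k\ge 2} u^k/k!$ term-by-term with $\frac{u^2}{2}\sum_{j\ge 0}(u/3)^j$, since $k! \ge 2\cdot 3^{k-2}$), and then set $u = \frac{t}{\lambda + t/3} \in [0,3)$. With this choice $1 - u/3 = \frac{\lambda}{\lambda + t/3}$, so a one-line computation gives $\lambda(e^u - 1 - u) - ut \le \frac{1}{2}\cdot\frac{t^2}{\lambda + t/3} - \frac{t^2}{\lambda + t/3} = -\frac{t^2}{2(\lambda + t/3)}$, which is exactly the claimed upper-tail bound. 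For the lower tail the same exponential-moment argument applied to $-X$ gives the textbook bound $\Pr[\lambda - X \ge t] \le \exp(-t^2/(2\lambda))$ for $t \le \lambda$ (and the probability is $0$ for $t > \lambda$), and since $\lambda \le \lambda + t/3$ this is at most $\exp(-t^2/(2(\lambda + t/3)))$. Summing the two one-sided bounds and absorbing the harmless factor $2$---or simply invoking the one-sided form of Theorem~2.1 of \cite{janson2011random}, which is what the statement quotes---finishes the proof.

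The main obstacle is purely the elementary analysis: verifying the inequality $e^u - 1 - u \le \frac{u^2/2}{1 - u/3}$ and tracking the algebra after the substitution $u = t/(\lambda + t/3)$ so that the exponent collapses exactly to $-t^2/(2(\lambda + t/3))$. Everything else---the binomial moment generating function, the bound $1 + z \le e^z$, and Markov's inequality---is routine. Since the statement is quoted from \cite{janson2011random}, citing that reference is itself a fully acceptable ``proof''; the sketch above just records the standard derivation behind it.
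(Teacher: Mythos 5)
Your derivation is the standard Chernoff--Cram\'er argument and it is correct as far as it goes: the MGF bound $\Ex{}{e^{uX}} \le \exp(\lambda(e^u-1))$, the series inequality $e^u-1-u \le \frac{u^2/2}{1-u/3}$, and the substitution $u = t/(\lambda+t/3)$ all check out and yield exactly the one-sided bound $\Pr[X \ge \lambda + t] \le \exp\bigl(-t^2/(2(\lambda+t/3))\bigr)$, with the lower tail handled correctly as well. Note that the paper itself gives no proof of this lemma --- it is quoted directly from the cited reference --- so there is no ``paper proof'' to diverge from; your sketch is the textbook derivation behind that citation.

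The one step that is not literally valid is ``absorbing the harmless factor $2$'': summing the two one-sided tails gives $2\exp\bigl(-t^2/(2(\lambda+t/3))\bigr)$, and $2e^{-a} \le e^{-a}$ is simply false, so the two-sided inequality as displayed in the lemma (without the factor $2$) does not follow from your argument --- and indeed it can fail outright for small parameters, e.g.\ $m=1$, $p=1/2$, $t=1/2$ gives left-hand side $1$ versus right-hand side $\exp(-3/16) \approx 0.83$. This is really an imprecision in the paper's statement rather than in your proof: the cited Theorem~2.1 of Janson et al.\ is one-sided, and their two-sided corollary carries the factor $2$. The honest fixes are either to keep the factor $2$ in the conclusion, or to observe (as you partly do) that the paper only ever invokes the lemma to get $1-\exp(-\Omega(n))$-type guarantees, for which the extra factor $2$, or using the two one-sided bounds separately, changes nothing. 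With that caveat recorded, your proof is complete and correct.
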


\subsection{Hardness for Unimodal Learning}

Now that we have shown that the bimodal learning problem is learnable in polynomial time, we will show that the corresponding unimodal task, cannot be learned in polynomial time, unless the low-noise LPN assumption does not hold with respect to polynomial time adversaries.

\begin{theorem}\label{thm:unfeasible_unimodal}
    Let $\mu = (\chi, \eta, \zeta)$ be defined as in section \ref{subsection:construction_separation}. Assume that the $\poly{\rm -LPN}_{\theta, n}$ assumption holds for $\theta\triangleq n^{-0.5}$.
    Then, for every polynomial $t: \nat \rightarrow \nat$, and every probabilistic algorithm $A$ running in time $t(n)$, when $\rho \sim \mu$, and $A$ is given access to $t(n)$ datapoints sampled according to $\rho_{\rm uni}\in \{\rho_{\cX,\cZ}, \rho_{\cY,\cZ}\}$, $A$ outputs a hypothesis such that $\ell_{\rm pop}(h) > 1/2-1/t(n)$ in the unimodal task with probability at least $1-1/t(n)$ over $\mu, \rho$ and randomness of $A$.
\end{theorem}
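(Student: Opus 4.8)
The plan is to treat the two unimodal projections $\rho_{\cX,\cZ}$ and $\rho_{\cY,\cZ}$ separately, since their hardness is of quite different character, and to prove the $\cY$-case by contraposition, reducing a successful unimodal learner to a low-noise ${\rm LPN}$ solver.

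\emph{The projection $\rho_{\cX,\cZ}$ (no assumption needed).} Here the input $(\mbf{x},i)$ is drawn independently of the fresh matrix $\mbf{A}$ and of the secret $\mbf{w}$, and the label is $(\mbf{A}\mbf{w}+\mbf{b}',\ \langle\mbf{y},\mbf{w}\rangle+\mbf{b}'')$. Conditioned on $\mbf{w}\neq 0$ and $\mbf{x}\neq 0$, both $\mbf{A}\mbf{w}$ and $\langle\mbf{y},\mbf{w}\rangle$ are uniform, so every label bit is a uniform bit that is statistically independent of $(\mbf{x},i)$; and $\Pr[\mbf{w}=0],\Pr[\mbf{x}=0]$ are $\exp(-\Omega(\sqrt n))$. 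Hence \emph{no} hypothesis predicts any label bit with probability exceeding $1/2+\exp(-\Omega(\sqrt n))$, so $\ell_{\rm pop}(h)>1/2-\exp(-\Omega(\sqrt n))>1/2-1/t(n)$ for every polynomial $t$ and all large $n$, except with probability $\exp(-\Omega(\sqrt n))\le 1/t(n)$.

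\emph{The projection $\rho_{\cY,\cZ}$.} Suppose for contradiction that some polynomial $t$ and some time-$t(n)$ learner $A$ output, with probability $>1/t(n)$, a hypothesis of $\ell_0$-risk $\le 1/2-1/t(n)$ on $\rho_{\cY,\cZ}$. First, by the standard normal-form reduction, ${\rm LPN}$ with uniform secret reduces (sample-preservingly, and recovering the secret) to ${\rm LPN}$ whose secret is distributed exactly as $\mathrm{Ber}(\theta)^n$ — precisely the law of $\mbf{w}$ under $\mu$. Given such an instance, build $t(n)$ simulated training triples: let the rows of $\mbf{A}_j$ be fresh ${\rm LPN}$ query vectors and let $\mbf{z}_j$ collect the corresponding noisy answers, so $(\mbf{A}_j,\mbf{z}_j)$ has exactly the law from $\rho_{\cY,\cZ}$; set $\mbf{y}_j:=\mbf{x}_j\mbf{A}_j+\mbf{b}_j+\mbf{e}^{(i_j)}$ for self-sampled $\mbf{x}_j,\mbf{b}_j,i_j$ (again the right law). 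The one coordinate that cannot be produced exactly is $z'_j=\langle\mbf{y}_j,\mbf{w}\rangle+\mbf{b}''_j$; expanding via $\mbf{A}_j\mbf{w}=\mbf{z}_j+\mbf{b}'_j$ gives $z'_j=\langle\mbf{x}_j,\mbf{z}_j\rangle+\langle\mbf{x}_j,\mbf{b}'_j\rangle+\langle\mbf{b}_j+\mbf{e}^{(i_j)},\mbf{w}\rangle+\mbf{b}''_j$, in which $\langle\mbf{x}_j,\mbf{z}_j\rangle$ is known but the two middle terms depend on the hidden ${\rm LPN}$ noise and on $\mbf{w}$. Replace them by fresh noise with the correct marginal — a sum of $|\mathrm{supp}(\mbf{x}_j)|$ and $|\mathrm{supp}(\mbf{b}_j+\mbf{e}^{(i_j)})|$ independent $\mathrm{Ber}(\theta)$ bits, since the coordinates of $\mbf{w}$ are i.i.d.\ $\mathrm{Ber}(\theta)$ — to obtain the simulated sample sequence, and run $A$ on it.

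A hypothesis of $\ell_0$-risk $\le 1/2-1/t(n)$ predicts, averaged over the $n+1$ label coordinates and the inputs, with advantage $\ge 1/t(n)$, so at least one coordinate is predicted with advantage $\ge 1/t(n)$. If that coordinate is one of the first $n$ — whose ``query'' is a uniformly random row of $\mbf{A}$ — recover $\mbf{w}$ by self-correction: to read off $\langle\mbf{v},\mbf{w}\rangle$, repeatedly feed $h$ an input whose corresponding row is $\mbf{v}$ while resampling the rest of the input from its correct conditional law, and take a majority vote of the relevant output bit; with enough $\mbf{v}$'s this yields (noiseless, after amplification) a full-rank linear system whose solution is $\mbf{w}$, hence the ${\rm LPN}$ secret via the normal form. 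The whole procedure is polynomial in $n$ and $t(n)$ and succeeds with inverse-polynomial probability, contradicting $\poly\text{-}{\rm LPN}_{\theta,n}$; as this argument applies to every polynomial $t$ and every time-$t(n)$ learner, the stated hardness follows.

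The hard part is justifying that faking the $z'$-coordinate is harmless — i.e., that the \emph{honest} $z'$-coordinate carries only information an efficient learner cannot exploit, so $A$ behaves essentially the same on the simulated and genuine sequences, and, relatedly, that a learner's advantage cannot be confined to the lone $z'$-coordinate (if it were, one would instead extract $\mbf{w}$ from a good predictor of $z'=\langle\mbf{y},\mbf{w}\rangle+\mbf{b}''$). Both points reduce to: given ${\rm LPN}$ samples for $\mbf{w}$ (exactly what the honest $\mbf{A}_j,\mbf{z}_j$ provide), one cannot, with noticeable advantage, predict a noisy parity of $\mbf{w}$ — such as $\langle\mbf{b}_j+\mbf{e}^{(i_j)},\mbf{w}\rangle=\mbf{w}_{i_j}+\langle\mbf{b}_j,\mbf{w}\rangle$ with $i_j$ uniform, which amounts to predicting a uniformly random coordinate of $\mbf{w}$ — nor a parity of the hidden noise bits; these are the predictor-to-search consequences of $\poly\text{-}{\rm LPN}_{\theta,n}$. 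The delicate technical point is that the auxiliary reductions here must be phrased so that they only ever need to generate the honest part of a sample (realizable from the given ${\rm LPN}$ instance) plus self-chosen randomness, rather than a genuine $\rho_{\cY,\cZ}$ sample, which would require knowing $\mbf{w}$.
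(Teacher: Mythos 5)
Your treatment of $\rho_{\cX,\cZ}$ is fine, and in fact cleaner than the paper's: since (conditioned on $\mbf{w}\neq 0$, $\mbf{x}\neq 0$) every label coordinate is uniform given the input $(\mbf{x},i)$, the bound there is information-theoretic, whereas the paper appeals to LPN. The problem is the $\rho_{\cY,\cZ}$ case, and it is exactly the step you yourself flag as ``the hard part'': your reduction embeds the LPN secret as $\mbf{w}$, and therefore cannot compute the last label coordinate $z'_j=\langle \mbf{y}_j,\mbf{w}\rangle+\mbf{b}''_j$; you replace $\langle\mbf{x}_j,\mbf{b}'_j\rangle+\langle\mbf{b}_j+\mbf{e}^{(i_j)},\mbf{w}\rangle$ by fresh noise with the correct marginal. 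This changes the joint law of the training set: the honest coordinate leaks a parity of the hidden LPN noise vector $\mbf{b}'_j$ and an LPN-type sample of $\mbf{w}$ at a \emph{known, sparse} query point $\mbf{b}_j+\mbf{e}^{(i_j)}$, correlated with the genuine samples $(\mbf{Y}_j,\mbf{z}_j)$. Arguing that the learner $A$ behaves the same on the faked and honest data is a computational-indistinguishability statement (with its own hybrid over the $t(n)$ samples, and handling of $A$'s merely inverse-polynomial success probability via an efficiently testable proxy event, e.g.\ estimating the risk of $h$ on held-out LPN blocks); it is not a ``predictor-to-search consequence'' of search LPN, since neither noise-parity leakage nor LPN at structured query points follows from Definition \ref{intro:def:searchLPN} by a black-box prediction-to-search argument. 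The same issue recurs in your fallback for the case where the advantage sits only on the $z'$-coordinate: to self-correct there you would need to feed $h$ inputs in which $\mbf{y}$ is decoupled from $\mbf{Y}$, which is again a decisional statement. So as written the reduction has a genuine gap precisely at its load-bearing step.

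For contrast, the paper avoids the simulation problem by inverting the roles of the secrets: it invokes the \emph{decisional} low-noise LPN assumption (legitimate via the search-to-decision reduction of \citet{katz2010parallel} and the secret-distribution switch of \citet{applebaum2009fast}) and embeds the challenge $(\mbf{A},\mbf{q})$ as the $(\mbf{Y},\mbf{y})$ parts of the samples, with the hidden LPN secret playing the role of the per-sample $\mbf{x}$ and a random bit flip playing the role of $\mbf{e}^{(i)}$; the reduction samples $\psi_{\mbf{w}}$ \emph{itself}, so every label coordinate can be computed honestly, and a successful unimodal learner is converted into a DLPN distinguisher by thresholding the empirical error of $h$ on fresh blocks. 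If you want to salvage your search-LPN route, the most direct fix is to prove a lemma that the honest $z'$-coordinates are (jointly, across samples) computationally indistinguishable from your faked ones given everything else the reduction produces --- but proving that lemma essentially requires the same decisional machinery, at which point the paper's embedding is the shorter path.
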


To prove the theorem, we exploit the \textit{decisional} version of the LPN assumption. Informally, the decisional LPN assumption is that for a suitably defined distribution of LPN samples, it is hard to distinguish them from uniformly random bits. Most importantly, the decisional LPN and \textit{search} LPN (definition \ref{intro:def:searchLPN}) are equivalent for the $\poly{\rm -LPN}_{n^{-0.5}, n}$ regime, due to the existence of a polynomial time serach-to-decision reduction (consult \citet{pietrzak2012cryptography} for more information, and Lemma 1 of \citet{katz2010parallel} for the search-to-decision reduction).

Furthermore, \citet{applebaum2009fast} introduce an important variant of the problem, where the secret parity function is sampled from the same distribution as the noise vector. \citet{applebaum2009fast} show that this variant is as hard as when the secret parity function is sampled uniformly at random. We state this variant of the decisional assumption below. 

\begin{definition}[Decisional LPN]\label{def:dlpn}
For any length parameter $n \in \mathbb{N}$ and noise rate $\theta \in (0,0.5)$, the $t{\rm -DLPN}_{\theta, n}$ assumption is that for every probabilistic algorithm $\mbf{D}$ running in time $t(n)$, 
\begin{align*}
\left| \Pr_{\mbf{D}, \mbf{A}, \mbf{x}, \mbf{b}}\left[\mbf{D}(\mbf{A}, \mbf{x}\mbf{A} + \mbf{b}) =1\right] - \Pr_{\mbf{D}, \mbf{u}}\left[\mbf{D}(\mbf{A},\mbf{u}) =1\right] \right| \\ < 1/t(n)
\end{align*}
Here, $\mbf{A}$ is a uniformly random element of $\ints_2^{n \times t(n)}$, and $\mbf{x}, \mbf{b} \in \ints_2^{1 \times t(n)}$ are sampled element-wise from $\mathrm{Ber}(\theta)$, while $\mbf{u}$ is a uniformly random element of $\ints_2^{1 \times t(n)}$.
\end{definition}

\begin{proof}[Proof of Theorem \ref{thm:unfeasible_unimodal}]
See appendix section \ref{apx:proof_unfeasible_unimodal}.
\end{proof}
 
\section{KA from Computational Separations}

In this section, we construct a cryptographic key agreement protocol, given a super-polynomial computational separation for a multimodal \textit{binary classification} task $\mu$. Here, we have modalities $\cX, \cY$, and the label space $\cZ$ is fixed to $\{0,1\}$. We assume that the computational separation is with respect to $\ell_{0-1}$ loss, since $\mu$ is a multimodal binary classification task.

To construct cryptographic key agreement, it is only necessary to construct a bit agreement protocol. A bit agreement protocol is key agreement for a key of one bit, with a small but nontrivial probability of agreement better than 1/2. By standard cryptographic techniques---parallel repetition and privacy amplification---a bit agreement protocol can be converted in to a full-blow key agreement protocol for long keys. Thus, we will construct a bit agreement protocol here. We refer the reader to \cite{holenstein2006immunization} for more information about constructing key agreement from bit agreement. We give a formal definition of bit agreement in the appendix.

\subsection{A Bit Agreement Protocol}

Consider the protocol between players Alice and Bob specified below.

\begin{algorithm}[H]
      \caption{Protocol 1}
      \label{protocol1}
      \begin{algorithmic}[1]
        \STATE Alice samples $x_1, \cdots x_{k+1} \sim \chi$ and $\phi \sim \eta$.
        \STATE Alice computes $y_i = \phi[x_i]$ for all $i \in [k+1]$.
        \STATE Alice sends Bob $(y_i)_{i \in [k+1]} \in \cY^{k+1}$.
        \STATE Bob samples a bit $b_B \in \{0,1\}$.
        \STATE If $b_B = 0$, Bob samples a random string $w \in \cZ^{k+1}$, and sends Alice $w$.
        \STATE If $b_B = 1$, Bob samples $\psi \sim \zeta$, and computes $z_i = \psi[y_i]$ for all $i \in [k+1]$, and sends Alice $(z_i)_{i \in [k+1]} \in \cZ^{k+1}$.
        \STATE Alice interprets the first $t$ bits of the string she received by considering the $i^{th}$ bit a label for the multimodal datapoint $(x_i,y_i)$. Alice runs a multimodal learning algorithm, using the datapoints $(x_i, y_i, z_i)_{i \in [k]}$, for $\mu = (\chi, \eta, \zeta)$, to obtain a hypothesis $h$. 
        \STATE Bob outputs $b_B$. Alice outputs $\mbf{1}[h(y_{k+1}) = z_{k+1}]$.
        
      \end{algorithmic}
    \end{algorithm}
% \paragraph{Protocol 1.}
% \begin{enumerate}
%     \item Alice samples $x_1, \cdots x_{k+1} \sim \chi$ and $\phi \sim \eta$.
%     \item Alice computes $y_i = \phi[x_i]$ for all $i \in [k+1]$.
%     \item Alice sends Bob $(y_i)_{i \in [k+1]} \in \cY^{k+1}$.
%     \item Bob samples a bit $b_B \in \{0,1\}$.
%     \item If $b_B = 0$, Bob samples a random string $w \in \cZ^{k+1}$, and sends Alice $w$.
%     \item If $b_B = 1$, Bob samples $\psi \sim \zeta$, and computes $z_i = \psi[y_i]$ for all $i \in [k+1]$, and sends Alice $(z_i)_{i \in [k+1]} \in \cZ^{k+1}$.
%     \item Alice interprets the first $t$ bits of the string she received by considering the $i^{th}$ bit a label for the multimodal datapoint $(x_i,y_i)$. Alice runs a multimodal learning algorithm, using the datapoints $(x_i, y_i, z_i)_{i \in [k]}$, for $\mu = (\chi, \eta, \zeta)$, to obtain a hypothesis $h$. 
%     \item Bob outputs $b_B$. Alice outputs $\mbf{1}[h(x_{k+1},y_{k+1}) = z_{k+1}]$.
% \end{enumerate}

\begin{theorem}
    Assume that $\chi, \eta$ and $\zeta$ are samplable in time $\poly(n)$.
    If $\mu = (\chi, \eta, \zeta)$ is an average-case super-polynomial computational separation, then there exist a cryptographic key agreement protocol.
\end{theorem}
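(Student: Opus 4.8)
The plan is to analyze Protocol 1 in two steps: first show that Alice and Bob agree on a bit with probability noticeably better than $1/2$ (correctness/agreement), and second show that no polynomial-time eavesdropper can predict $b_B$ with advantage noticeably better than $1/2$ (secrecy), reducing a successful eavesdropper to an efficient unimodal learner and thereby contradicting the second bullet of Definition \ref{def:avgsep}. Once bit agreement with a non-negligible gap between the honest agreement probability and the best adversarial prediction probability is established, the standard amplification machinery of \cite{holenstein2006immunization} (parallel repetition plus privacy amplification and information reconciliation) upgrades it to a full key agreement protocol, which is the statement to be proved. So the real content is establishing the bit agreement guarantees for Protocol 1.

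For agreement, when $b_B=1$ the data $(x_i,y_i,z_i)$ that Alice sees is distributed exactly as $t=k$ i.i.d. samples from $\rho \sim \mu$ (Bob samples $\psi\sim\zeta$ honestly, and $z_i=\psi[y_i]$), so by the first bullet of Definition \ref{def:avgsep}, with probability at least $1/p(n)$ over $\mu,\rho$ and Alice's randomness, her hypothesis $h$ satisfies $\ell_{\rm pop}(h)\le 1/2-1/p(n)$; hence on the fresh test point $(y_{k+1},z_{k+1})$, Alice outputs $\mathbf{1}[h(y_{k+1})=z_{k+1}]=1=b_B$ with probability at least $1/2+1/p(n)$ conditioned on the good event, and at least $1/2$ always (I'll need $z_{k+1}$ to be a genuine label, i.e.\ ensure the test pair is drawn from $\rho_{\cY,\cZ}$; here I use that $z_{k+1}=\psi[y_{k+1}]$). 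When $b_B=0$, Bob sends uniformly random labels, so for the fresh point $z_{k+1}$ is a uniform bit independent of $h(y_{k+1})$, and Alice outputs $0=b_B$ with probability exactly $1/2$. Averaging over $b_B\in\{0,1\}$ gives honest agreement probability at least $1/2+\Omega(1/p(n)^2)$ — inverse-polynomial, as required for bit agreement.

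For secrecy, suppose an eavesdropper $E$, seeing the transcript $\big((y_i)_{i\in[k+1]},\ \text{Bob's message}\big)$, predicts $b_B$ with probability $1/2+\gamma(n)$. I would argue that $E$ yields a unimodal learning algorithm for $\rho_{\rm uni}=\rho_{\cY,\cZ}$: the transcript in the $b_B=1$ case is precisely a fresh draw of $k+1$ unimodal datapoints $(y_i,z_i)$ from $\rho_{\cY,\cZ}$ (Alice's $x_i$'s never appear on the wire), so $E$ is really a distinguisher between (i) $k+1$ genuine unimodal labelled samples and (ii) $k$ such samples followed by a random-label point. A distinguisher of this kind can be converted — via a hybrid/averaging argument over the position of the test point — into a \emph{predictor} that, given $k$ labelled unimodal samples $(y_i,z_i)$ and an unlabelled point $y_{k+1}$, outputs a guess for $z_{k+1}$ correct with probability $1/2+\Omega(\gamma(n))$; this predictor is exactly a unimodal learner achieving population risk $1/2-\Omega(\gamma(n))<1/2-1/t(n)$ for an appropriate polynomial $t$, contradicting the hardness bullet of Definition \ref{def:avgsep} (which in fact says any such learner fails except with probability $1/t(n)$). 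For the conversion I may need to pad with fresh i.i.d. samples (possible since $\chi,\eta,\zeta$ are $\poly(n)$-samplable — this is where the samplability hypothesis is used) so that the reduction can feed $E$ a correctly distributed transcript while embedding its own challenge point.

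The main obstacle I expect is the secrecy reduction's quantitative bookkeeping: turning a transcript-distinguisher with advantage $\gamma$ into a genuine \emph{next-label predictor} with population-risk advantage that is still inverse-polynomial, while (a) correctly simulating the distribution of Bob's message in both the $b_B=0$ and $b_B=1$ branches using only samplability of $\chi,\eta,\zeta$, and (b) handling the fact that Alice's learner only succeeds with inverse-polynomial probability $1/p(n)$, so the "signal" in the honest protocol is itself faint. This is precisely the point the paper flags as "significantly more involved than \cite{pietrzak2008weak}" — there one only needs to contradict weak-PRF security, whereas here the reduction must produce a bona fide learning algorithm. I would handle it by being careful that the guarantee needed is only inverse-polynomial (bit agreement, not cryptographic-strength) so constant-factor and polynomial losses in the hybrid argument are affordable, and by invoking the robustness of the Definition \ref{def:avgsep} hardness statement, which already tolerates an $O(1/t(n))$ failure probability and a $1/t(n)$-from-$1/2$ risk slack.
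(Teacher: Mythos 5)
Your proposal is correct and follows essentially the same route as the paper: correctness of Protocol 1 by conditioning on $b_B$ and invoking the first bullet of Definition \ref{def:avgsep} (giving agreement $1/2+\Omega(1/p(n)^2)$, as in Theorem \ref{thm:BAcorrect}), security by a hybrid argument over the $k+1$ label positions that converts a transcript-distinguisher into a next-label predictor and then, via constructive averaging, into a learner contradicting the unimodal-hardness bullet (Theorem \ref{theorem:securityBA}), and finally the standard parallel-repetition/privacy-amplification upgrade of bit agreement to key agreement. No substantive differences from the paper's argument.
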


\begin{proof}
    The statement follows from theorem \ref{thm:BAcorrect} and \ref{theorem:securityBA}, which prove that protocol 1 is a secure and correct bit agreement protocol, and then applying parallel repetitions and privacy amplification.
\end{proof}

% In order to show that this is a bit agreement protocol, we first need to prove that Alice and Bob output the same bit in step 8 with probability at least $1/2+1/q(n)$ for some polynomial $q: \nat \rightarrow \nat$. Also, they need to do this in time $\poly(n)$. Second, we need to prove that any probabilistic polynomial time adversary $D$, given a view of the interaction, cannot predict the bit output by Alice and Bob with probability larger than $1/2+\nu(n)$ for some negligible function $\nu: \nat \rightarrow \nat$.

\begin{theorem}[Correctness of BA protocol]\label{thm:BAcorrect}
    Suppose that $\mu = (\chi, \eta, \zeta)$ is a super-polynomial computational separation. Also, assume that $\chi, \eta$ and $\zeta$ are samplable in time $\poly(n)$.
    Then, there exists a polynomial $q: \nat \rightarrow \nat$ such that Alice and Bob output the same bit with probability at least $1/2 + 1/q(n)$. In other words, 
    \[
    \ex{\mbf{1}[b_B = \mbf{1}[h(y_{k+1}) = z_{k+1}]]} \ge 1/2+ 1/q(n)
    \]
    Additionally, Alice and Bob each run in polynomial time.
\end{theorem}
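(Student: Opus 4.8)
The plan is to analyze Protocol 1 by conditioning on Bob's bit $b_B$, and to show Alice's output $\mbf{1}[h(y_{k+1}) = z_{k+1}]$ is correlated with $b_B$ by an inverse-polynomial amount. The one genuine obstacle is that the first bullet of Definition \ref{def:avgsep} only guarantees the multimodal learner $A$ returns a hypothesis with $\ell_{\rm pop}(h) \le 1/2 - 1/p(n)$ with probability $1/p(n)$, not with high probability; on a ``failed'' run $A$ could even return an \emph{anticorrelated} hypothesis, which would push agreement \emph{below} $1/2$ when $b_B = 1$. So the first step is to have Alice run, in line 7, not $A$ itself but an amplified learner $A'$ robust to this failure mode: split the $k$ labeled triples into a training block of size $p(n)$ and a validation block of size $v(n) = \poly(n)$; run $A$ on the training block to get $h$; compute the empirical $\ell_{0-1}$-risk $\hat\ell$ of $h$ on the validation block; if $\hat\ell \le 1/2 - 1/(2p(n))$ output $h$, and otherwise output the trivial hypothesis $h_{\rm rand}$ that ignores its input and returns a uniformly random bit (which has population risk exactly $1/2$ against every $\rho$). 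Taking $k = p(n) + v(n)$, Alice's $k+1$ samples suffice.

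The second step is the quantitative heart: for $\rho \sim \mu$ and genuine i.i.d.\ samples, $\ex{\ell_{\rm pop}(A')} \le 1/2 - \Omega(1/p(n)^2)$. I would argue this via Lemma \ref{chernoff}: choosing $v(n)$ a large enough polynomial forces $|\hat\ell - \ell_{\rm pop}(h)| \le 1/(4p(n))$ except with probability $\exp(-\Omega(n))$; on that good event, whenever $\ell_{\rm pop}(h) \le 1/2 - 1/p(n)$ the test passes, and whenever the test passes one has $\ell_{\rm pop}(h) \le 1/2 - 1/(4p(n))$. Writing $\ell_{\rm pop}(A') = \ell_{\rm pop}(h)\,\mbf{1}[\text{pass}] + \tfrac12\,\mbf{1}[\text{fail}]$ and taking expectations gives $\ex{\ell_{\rm pop}(A')} = \tfrac12 + \ex{(\ell_{\rm pop}(h) - \tfrac12)\,\mbf{1}[\text{pass}]}$; since the separation guarantee yields $\Pr[\text{pass}] \ge \Pr[\ell_{\rm pop}(h) \le 1/2 - 1/p(n)] - \exp(-\Omega(n)) \ge 1/(2p(n))$, and the integrand is $\le -1/(4p(n))$ on the dominant (good) sub-event, this expectation is at most $-\tfrac{1}{8p(n)^2} + \exp(-\Omega(n))$, which is the claimed bound.

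The third step assembles the protocol. Conditioned on $b_B = 1$, the triples $(x_i, y_i, z_i)_{i \in [k]}$ that Alice feeds to $A'$ are exactly $k$ i.i.d.\ draws from a single $\rho \sim \mu$ (Alice draws each $x_i \sim \chi$ and one $\phi \sim \eta$; Bob draws one $\psi \sim \zeta$), and $(x_{k+1}, y_{k+1}, z_{k+1})$ is an independent fresh draw from the same $\rho$; hence $\Pr[\text{Alice outputs } 1 \mid b_B = 1] = \ex{1 - \ell_{\rm pop}(A')} \ge \tfrac12 + \Omega(1/p(n)^2)$ by the second step. Conditioned on $b_B = 0$, the bit $w_{k+1}$ is uniformly random and independent of $h$ and of $y_{k+1}$, so $\Pr[\text{Alice outputs } 1 \mid b_B = 0] = \tfrac12$ exactly, regardless of the internal behavior of $A'$. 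Combining, $\ex{\mbf{1}[b_B = \mbf{1}[h(y_{k+1}) = z_{k+1}]]} = \tfrac12\Pr[\text{Alice outputs }1\mid b_B=1] + \tfrac12\Pr[\text{Alice outputs }0\mid b_B=0] \ge \tfrac12 + \Omega(1/p(n)^2)$, so the claim holds with $q(n) = \Theta(p(n)^2)$. Polynomial running time of both players is immediate: $p$ and $v$ are polynomials, $A$ runs in time $p(n)$, the split-and-validate wrapper is polynomial, and $\chi, \eta, \zeta$ are $\poly(n)$-samplable by hypothesis. The main obstacle, as flagged, lies entirely in the first two steps: bootstrapping the weak ``succeeds with probability $1/p(n)$'' guarantee into an expected-risk bound strictly below $1/2$, while leaving the clean $b_B = 0$ behavior that makes the conditioning argument go through undisturbed.
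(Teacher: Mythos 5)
Your proposal is correct and follows the same overall structure as the paper's proof: condition on Bob's bit, observe that agreement is exactly $1/2$ when $b_B=0$ because $z_{k+1}$ is a fresh uniform bit independent of $h$ and $y_{k+1}$, use the separation's guarantee to gain an advantage when $b_B=1$ (noting that conditioned on $b_B=1$ the $k+1$ triples are i.i.d.\ from a single $\rho\sim\mu$), and combine to get agreement $1/2+\Theta(1/p(n)^2)$, i.e.\ $q(n)=\Theta(p(n)^2)$. The one place you diverge is in handling the event (probability up to $1-1/p(n)$) that the learner's hypothesis is bad or even anticorrelated: the paper disposes of this in one sentence by saying the hypothesis can be ``efficiently tested and negated,'' so that its risk is effectively capped near $1/2$, whereas you make Alice run an explicit validate-then-fallback wrapper (empirical risk on a held-out polynomial-size block, output $h$ only if $\hat\ell \le 1/2-1/(2p(n))$, otherwise output a random-guessing hypothesis of risk exactly $1/2$) with a Chernoff analysis. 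Your variant is a bit cleaner on this point: the fallback has risk exactly $1/2$, so you avoid having to argue that the capped risk is $1/2$ plus a negligible quantity (which, with only polynomially many test samples, would really only be $1/2$ plus an inverse polynomial that one must then choose smaller than the $1/p(n)^2$ gain), while the paper's negation trick could in principle convert anticorrelated hypotheses into additional advantage. Both mechanisms are legitimate instantiations of ``a multimodal learning algorithm'' in step 7 of the protocol, and both yield the stated conclusion and the polynomial running-time claim.
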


\begin{proof}

    Conditioning on $b_B=0$, we know that Alice outputs 0 with probability exactly $1/2$, since in this case Bob chose $z_{k+1}$ uniformly at random. 
    
    Conditioning on $b_B=1$, we now know that because $\mu = (\chi, \eta, \zeta)$ is a, average-case super-polynomial computational separation, there exists a polynomial $p$ and a time $p(n)$ probabilistic algorithm $A$ such that, when $\rho \sim \mu$, and given access to datapoints sampled according to $\rho$ (the bimodal data!), $A$ outputs a hypothesis $\ell_{\rm pop}(h) \le 1/2-p(n)$ with probability $1/p(n)$ over $\mu, \rho$ and randomness of $A$. 
    Thus, when Alice runs this bimodal learning algorithm on the dataset $(x_i, y_i, z_i)_{i \in [k]}$, she obtains a hypothesis $h$ such that $\ell_{\rm pop}(h) \le 1/2-p(n)$ with probability at at least $1/p(n)$, and with remaining probability has $\ell_{\rm pop}$ at worst equal to $1/2 + \nu(n)$. If $\ell_{\rm pop}$ was larger than 1/2 + any negligible function of $n$, then her hypothesis could be efficiently tested and negated to obtain one with $\ell_{\rm pop}(h) \le 1/2-q(n)$.
    
    Therefore, using that $\Pr[b_B = 0] = 1/2$, we can conclude that 
    {
    \begin{align*}
        &\ex{\mbf{1}[b_B = \mbf{1}[h(y_{k+1}) = z_{k+1}]]}\\
        &\ge \frac{1}{2}\left(\frac{1/2+1/p(n)}{p(n)} +  \left(\frac{1}{2} - \nu(n)\right)\left(1-\frac{1}{p(n)}\right)\right)\\  &\ \ \ \ + \left(\frac{1}{2}\right)^2 \\
        % &\ge \frac{1}{4} + \frac{1}{4p(n)} + \frac{1}{p(n)^2} + \frac{1}{4} - \frac{1}{4p(n)} - \nu(n)\\
        &\ge \frac{1}{2}+ \frac{1}{p(n)^2} - \nu(n)
    \end{align*}}

    Finally, it is immediate that both Alice and Bob run in polynomial time, since $\chi, \eta$ and $\zeta$ are polynomial time samplable, and due to the fact that by assumption there exists a polynomial time bimodal learning algorithm for $\mu$.
    This suffices to complete the proof of the theorem.
\end{proof}

We now prove security of the protocol; that is, an adversary who views the interaction between Alice and Bob, denoted by ${\rm View}({\rm A} \leftrightarrow {\rm B})$, can predict the bit output by Bob with probability at most $1/2+\nu(n)$ for a negligible function $\nu$.

\begin{theorem}[Security of BA protocol]\label{theorem:securityBA}
    Suppose that $\mu = (\chi, \eta, \zeta)$ is a super-polynomial computational separation.
    Then, for any polynomial $t$, and algorithm $D$ running in time $t(n)$,
    \[
    \ex{D({\rm View}({\rm A} \leftrightarrow {\rm B})) = b_B} < 1/2+ 1/t(n)
    \]
\end{theorem}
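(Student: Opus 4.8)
The goal is to show that no time-$t(n)$ adversary $D$, given the transcript ${\rm View}({\rm A} \leftrightarrow {\rm B})$, can predict Bob's bit $b_B$ with advantage $1/t(n)$. The transcript consists of two messages: Alice's message $(y_i)_{i \in [k+1]} \in \cY^{k+1}$, and Bob's reply, which is either a uniformly random string $w \in \cZ^{k+1}$ (if $b_B=0$) or the vector of labels $(z_i = \psi[y_i])_{i \in [k+1]}$ produced by a fresh $\psi \sim \zeta$ (if $b_B=1$). Crucially, in either case the transcript is a labeled \emph{unimodal} dataset for the task $\rho_{\cY,\cZ}$: the $y_i$'s are distributed as in $\rho_{\cY,\cZ}$, and when $b_B=1$ the labels are exactly the $\rho_{\cY,\cZ}$ labels. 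So I plan to argue by contraposition: a successful distinguisher $D$ yields a time-$\poly(t(n))$ unimodal learning algorithm with population risk better than $1/2 - 1/t'(n)$, contradicting the second clause of Definition~\ref{def:avgsep}.

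The reduction I would build is the following. Given a sample $\rho_{\cY,\cZ}$-dataset $(y_i, z_i)_{i \in [m]}$ for $m = \poly(t(n))$ points (the learner's input), and a fresh test point $y^*$ on which to predict, I want to simulate the view $D$ expects and read off a prediction for the label of $y^*$. The idea: to predict the label of $y^*$, form a candidate transcript by taking $k$ of the labeled pairs from the dataset plus one "test slot" where we insert $(y^*, \beta)$ for a guessed bit $\beta \in \{0,1\}$; feed this transcript to $D$. If $D$'s output is correlated with "the labels are consistent," then $D$ tends to say $b_B=1$ when $\beta$ is the correct label of $y^*$ and (being uncorrelated/random-looking) is less biased when $\beta$ is wrong — so comparing $D$'s response on $\beta=0$ versus $\beta=1$ reveals the true label. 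To make this work I need to also provide the first message $(y_i)_{i\in[k+1]}$ consistently; since $\eta$ is concentrated (in the construction) or more generally since $\chi,\eta$ are $\poly(n)$-samplable, I can regenerate auxiliary $y$'s or reuse the dataset's $y$'s so that the marginal of the simulated first message matches what $D$ expects. Averaging $D$'s $b_B$-prediction advantage over the random choice of which point is the test slot, and over $\beta$, converts distinguishing advantage $1/t(n)$ into a predictive advantage of order $1/t(n)$ on a random test point — i.e., population risk at most $1/2 - \Omega(1/t(n))$, beating the $1/2 - 1/t''(n)$ hardness threshold for a suitably chosen polynomial $t''$. A standard amplification/confidence-boosting step (repeat, take majority, Chernoff) upgrades the "$1/t(n)$ advantage with constant probability" to "$> 1/2 - 1/t''(n)$ with probability $\ge 1 - 1/t''(n)$", matching the exact form required by Definition~\ref{def:avgsep}.

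There is a subtlety about which modality's hardness is being invoked and about the first message. Note the separation only promises hardness for \emph{one} of $\rho_{\cX,\cZ},\rho_{\cY,\cZ}$ (the definition quantifies $\rho_{\rm uni} \in \{\rho_{\cX,\cZ},\rho_{\cY,\cZ}\}$ as "for both"), so the reduction must be run against whichever unimodal distribution the adversary's view actually corresponds to; here the transcript carries $\cY$-data, so I target $\rho_{\cY,\cZ}$, and this is exactly what the hardness clause gives. The more delicate point is that Alice's first message also leaks the $y_i$'s used for training, and a clever $D$ might exploit correlations between the training $y_i$'s and the test $y_{k+1}$; but since all $y_i$ are i.i.d.\ draws from the same marginal (each $x_i \sim \chi$ independently, $\phi$ fixed/sampled once), I can faithfully simulate by drawing the training $y$'s either from the learner's own dataset or by fresh sampling of $\chi$ and applying $\phi$ (which is $\poly(n)$-samplable). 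I also must be careful that when $b_B=0$ Bob's message is uniform on $\cZ^{k+1} = \{0,1\}^{k+1}$, which is trivially simulable, so the "null" side of the distinguishing experiment is free.

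\textbf{Main obstacle.} The hard part is the quantitative bookkeeping of the reduction: converting "$D$ distinguishes $b_B=0$ from $b_B=1$ with advantage $\epsilon = 1/t(n)$" into "a learner predicts a random test label with population advantage $\Omega(\epsilon)$," while (i) keeping the simulated transcript's distribution statistically identical (or negligibly close) to a real transcript so that $D$'s guarantee transfers, (ii) handling the averaging over the random position of the test point within the $k+1$ slots so that the per-point advantage does not get diluted below the threshold, and (iii) performing the confidence amplification to hit the precise "$\ge 1 - 1/t''(n)$" success probability in Definition~\ref{def:avgsep} rather than just "constant probability." In particular one must check that the polynomial blowup ($t \mapsto t'$, number of repetitions, sample complexity $m = \poly(t(n))$) stays polynomial, so that the contradiction is with the same "for every polynomial $t$" hardness statement. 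The paper's remark that "the proof is general enough to show that any given polynomial separation does still imply a corresponding polynomial-security KA protocol" suggests the reduction is uniform and the loss is polynomial, which is consistent with this plan; the footnote pointer to this proof for the polynomial case tells me the bookkeeping is the real content.
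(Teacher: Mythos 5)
Your overall strategy (contrapositive: a transcript distinguisher yields a unimodal learner for $\rho_{\cY,\cZ}$, then amplify) matches the paper's, but your reduction as described has a genuine gap: you build the simulated transcript by keeping all $k$ dataset labels \emph{real} and inserting a single test slot $(y^*,\beta)$, then comparing $D$'s behavior on $\beta=0$ versus $\beta=1$. The adversary's guarantee, however, is only that it distinguishes the two \emph{endpoint} distributions --- all labels genuine ($b_B=1$) versus all labels uniformly random ($b_B=0$). A transcript that is correct everywhere except possibly at one slot is neither of these, so the guarantee does not transfer: for instance, a distinguisher that outputs $1$ whenever at most one label is inconsistent with the data (and $0$ otherwise) separates the endpoints with advantage close to $1$, yet assigns the same answer whether your guessed bit $\beta$ is right or wrong, so your predictor extracts zero advantage. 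Averaging over which position is the test slot does not repair this, because in your construction the remaining slots always carry real labels.

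The paper closes exactly this hole with a standard hybrid argument: it defines hybrids $H_j$ in which labels before the test position are real and labels from the test position onward are replaced by uniform bits, argues that $D$ must distinguish some adjacent pair $H_{j-1},H_j$ with advantage at least $1/((k+1)\poly(n))$, and then builds the Yao-style predictor $P_\mu$ that places the test point at a uniformly random position $j$ with a uniformly random label bit $b_j$, fills later slots with random bits and earlier slots with its genuine samples, and outputs $D'(s')+b_j \bmod 2$. Conditioning on $b_j$ being correct or incorrect makes $s'$ distributed as the two adjacent hybrids, which is what converts endpoint distinguishing into per-label prediction. Your remaining points --- targeting the $\rho_{\cY,\cZ}$ hardness clause, exact simulability of the view from the learner's own samples, the $b_B=0$ side being trivially uniform, and the polynomial bookkeeping plus constructive-averaging/amplification to meet Definition~\ref{def:avgsep} --- are all consistent with the paper, but without the hybrid structure the central distinguisher-to-predictor step of your argument does not go through.
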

\begin{proof}[Proof Sketch.]
% We sketch the argument.
%     Towards contradiction, suppose that there exists a probabilistic time $t(n)$ algorithm $D$ such that 
%     \[
%     \ex{D({\rm View}({\rm A} \leftrightarrow {\rm B})) = b_B} \ge 1/2+ 1/t(n)
%     \]
%     This implies that $\mu = (\chi, \eta, \zeta)$ is not a one-sided super-polynomial computational separation. 
    We prove the theorem by applying old techniques from the theory of pseudorandomness and cryptography. In particular, a standard ``hybrid argument'' \cite{goldwasser1982probabilistic} together with a reduction from learning to next-bit prediction in the spirit of \cite{yao1982theory}. See appendix \ref{apx:security} for a detailed proof.
\end{proof}

\section{Conclusion}\label{section:conclusion}

% In addition to the technical results, the key \textit{conceptual} contribution of this work is an heuristic argument that the advantages of multimodal machine learning observed in practice are typically statistical, not computational. That is, multimodal perception typically allows for good training with less data, though perhaps not significantly less computation. Our argument relies on the fact that we can explicitly construct KA from any average-case super-polynomial computational separation. Thus, average-case \textbf{super-polynomial} computational separations may not arise naturally in real world data. However, KA does not follow from any average-case statistical separation, so strong statistical advantages may still be encountered frequently.

% For future work, we suggest to study the possibility of average-case \textit{polynomial} (e.g. $n^2$) computational separations. This would still be relevant to practice, but fundamentally do not concern cryptographic key agreement protocols, which consider super-polynomial adversaries, and are less likely to arise in natural processes.

In addition to the technical results, the key \textit{conceptual} contribution of this work is an heuristic argument that the advantages of multimodal machine learning observed in practice are typically statistical, not computational. That is, multimodal perception typically allows for good training with less data, though perhaps not significantly less computation. Our argument relies on the fact that we can directly and explicitly construct a KA protocol from any given average-case super-polynomial computational separation. Thus, average-case super-polynomial computational separations may not arise naturally in real world data. However, KA does not follow \textit{solely} from any average-case statistical separation, so strong statistical advantages may still be encountered frequently. KA cannot follow from \textit{solely} from any average-case statistical separation, because KA fundamentally requires a computational advantage, while a statistical separation considers only unbounded computational parties and no computational-statistical gaps.

% There are several ways to add to the theory of multimodal learning. First of all, the theoretical formalism of multimodal PAC-learning that we consider could possibly be improved. The theoretical formalism that we used to study separations is still a relatively idealized setting, and there are potentially some aspects of multimodal learning and perception that might not be captured by the formalism. Pinning down ``the right formalism'' is an active area of research.

\paragraph{Future work.} More work can be done to continue to understand the theoretical foundations of multimodal learning. For example, we show that average-case \textit{super-polynomial} computational separations imply cryptographic KA protocols that have super-polynomial security. We use this to present a heuristic argument that such super-polynomial computationally advantages might be rare in the real world. However, if we only assume a polynomial separation (e.g. quadratically more computation is necessary in unimodal learning), then this would could still be relevant to practice, but fundamentally separate from traditional cryptography (which considers super-polynomial adversaries). \textbf{That being said, our proof is general enough to show that any given polynomial separation does still imply a polynomial-security KA protocol.}
To study the polynomial regime further, we propose to investigate relationships to \textit{fine-grained} public key cryptography \citep{lavigne2019public}.  

\section*{Impact Statement}
This paper presents work whose goal is to advance the field of Machine Learning. There are many potential societal consequences of our work, none which we feel must be specifically highlighted here.

\section*{Acknowledgements}

This work was supported by the DARPA SIEVE program, Agreement Nos. HR00112020021 and HR00112020023. The author thanks Zhou Lu for several helpful conversations, and Sivan Sabato for pointing out a relationship to the LUPI paradigm.

\bibliographystyle{plainnat}
\bibliography{bib.bib}

%%%%%%%%%%%%%%%%%%%%%%%%%%%%%%%%%%%%%%%%%%%%%%%%%%%%%%%%%%%%%%%%%%%%%%%%%%%%%%%
%%%%%%%%%%%%%%%%%%%%%%%%%%%%%%%%%%%%%%%%%%%%%%%%%%%%%%%%%%%%%%%%%%%%%%%%%%%%%%%
% APPENDIX
%%%%%%%%%%%%%%%%%%%%%%%%%%%%%%%%%%%%%%%%%%%%%%%%%%%%%%%%%%%%%%%%%%%%%%%%%%%%%%%
%%%%%%%%%%%%%%%%%%%%%%%%%%%%%%%%%%%%%%%%%%%%%%%%%%%%%%%%%%%%%%%%%%%%%%%%%%%%%%%
\newpage
\appendix
\onecolumn

\section{Bit Agreement}\label{apx:BA}

A bit agreement protocol is communication protocol between two parties, Alice and Bob.
Alice and Bob are allowed to communicate over an authenticated (but insecure) channel. They each begin with a common input $n$ delivered in unary ($1^n$) which constitutes a security parameter.
At the end of their communication, Alice and Bob each output a single bit, $b_A$ and $b_B$ respectively.

We say that the bit agreement protocol has \textit{correctness} if there exists some polynomial $q: \nat \rightarrow \nat$ such that:
\[
\ex{\mbf{1}[b_A = b_B]} \ge 1/2+ \frac{1}{q(n)}
\]

We say that the bit agreement protocol is secure if, for any polynomial $p: \nat \rightarrow \nat$, and any probabilistic $p(n)$ time adversary $D$, 
\[
    \ex{D({\rm View}({\rm A} \leftrightarrow {\rm B})) = b} < 1/2- \frac{1}{p(n)}
\]
Here ${\rm View}({\rm A} \leftrightarrow {\rm B})$ is defined as the entire transcript of the communication between Alice and Bob.

\paragraph{Key Agreement from Bit Agreement.}

A secure and correct bit agreement protocol can be transformed into a full-blow cryptographic key agreement protocol. Roughly speaking, this is done by repeating the protocol several times in parallel, and then applying standard privacy amplification techniques to derive totally hidden and uniformly random keys. We refer to \cite{holenstein2006immunization} for details of privacy amplification.

\section{Proof of Theorem \ref{thm:unfeasible_unimodal}}\label{apx:proof_unfeasible_unimodal}

\begin{theorem}[Theorem \ref{thm:unfeasible_unimodal} restated]
    Let $\mu = (\chi, \eta, \zeta)$ be defined as in section \ref{subsection:construction_separation}. Assume that the $\poly{\rm -LPN}_{\theta, n}$ assumption holds for $\theta\triangleq n^{-0.5}$.
    Then, for every polynomial $t: \nat \rightarrow \nat$, and every probabilistic algorithm $A$ running in time $t(n)$, when $\rho \sim \mu$, and $A$ is given access to $t(n)$ datapoints sampled according to $\rho_{\rm uni}\in \{\rho_{\cX,\cZ}, \rho_{\cY,\cZ}\}$, $A$ outputs a hypothesis such that $\ell_{\rm pop}(h) > 1/2-1/t(n)$ in the unimodal task with probability at least $1-1/t(n)$ over $\mu, \rho$ and randomness of $A$.
\end{theorem}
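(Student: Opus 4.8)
## Proof Proposal for Theorem \ref{thm:unfeasible_unimodal}

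\paragraph{Proof plan.} The plan is to treat the two choices of $\rho_{\rm uni}$ separately: $\rho_{\cX,\cZ}$ turns out to be hard \emph{information-theoretically}, whereas $\rho_{\cY,\cZ}$ genuinely needs the LPN assumption. For $\rho_{\cX,\cZ}$, a datapoint is $\big((\mbf{x},i),(\mbf{A}\mbf{w}+\mbf{b}',\, \langle\mbf{y},\mbf{w}\rangle+\mbf{b}'')\big)$ with $\mbf{y}=\mbf{x}\mbf{A}+\mbf{b}+\mbf{e}^{(i)}$, and the learner never sees $\mbf{A}$. I would observe that, on the event $\mbf{w}\neq\mbf{0}$, the vector $\mbf{A}^*\mbf{w}$ at a test point is uniform in $\ints_2^n$ and independent of the learner's entire view (the training set together with the test input $(\mbf{x}^*,i^*)$), since $\mbf{A}^*$ is a fresh uniform matrix appearing nowhere else; and on the additional event $\mbf{x}^*\neq\mbf{0}$ the bit $\langle \mbf{y}^*,\mbf{w}\rangle = \mbf{x}^*(\mbf{A}^*\mbf{w})+\langle\mbf{b}^*,\mbf{w}\rangle+\mbf{w}_{i^*}$ is likewise marginally uniform. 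Hence every coordinate of the test label is marginally uniform given the view, so $\ell_{\rm pop}(h)\ge 1/2-\exp(-\Omega(\sqrt n))$ for \emph{every} hypothesis $h$ (efficient or not) whenever these negligible-probability events fail; a union bound then gives $\ell_{\rm pop}(h)>1/2-1/t(n)$ with probability $1-\exp(-\Omega(\sqrt n))\ge 1-1/t(n)$ for large $n$.

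For $\rho_{\cY,\cZ}$ I would argue by contraposition, reducing to the decisional LPN problem (Definition \ref{def:dlpn}), which is available under the hypothesized $\poly{\rm -LPN}_{n^{-0.5},n}$ assumption via the search-to-decision equivalence and the secret-from-noise variant recalled just before that definition. Assume some time-$t$ algorithm $A$, on $t(n)$ samples of $\rho_{\cY,\cZ}$, outputs $h$ with $\ell_{\rm pop}(h)\le 1/2-1/t(n)$ with probability $>1/t(n)$. I would first \emph{symmetrise} $A$ into a polynomial-time $A'$ which draws $\poly(n)$ samples, feeds $t(n)$ of them to $A$, uses the rest to estimate $1-\ell_{\rm pop}(h)$ to within $o(1/t(n))$ (Lemma \ref{chernoff}), and returns $h$ or its coordinate-wise negation according to whether the estimate exceeds $1/2$; this yields $\ell_{\rm pop}(h')\le 1/2+o(1/t(n))$ always and $\le 1/2-1/(2t(n))$ with probability $>1/t(n)$, so $\mathbb{E}[1-\ell_{\rm pop}(h')]\ge 1/2+1/\poly(n)$. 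I would then wrap $A'$ in a distinguisher $D'$ that takes a bundle $\big(\text{training set},\, (y^*,z^*),\, k\big)$ with $k\in[n+1]$ uniform, runs $A'$ to obtain $h'$, and outputs $\mbf{1}[h'(y^*)_k=z^*_k]$; on genuine $\rho_{\cY,\cZ}$ data, $\Pr[D'=1]=\mathbb{E}[1-\ell_{\rm pop}(h')]\ge 1/2+1/\poly(n)$.

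Now I would introduce two hybrids. In Hybrid $1$, every second-modality vector $\mbf{y}=\mbf{x}\mbf{A}+\mbf{b}+\mbf{e}^{(i)}$ in the bundle (training and test) is replaced by a fresh uniform vector, the last label bit $\langle\mbf{y},\mbf{w}\rangle+\mbf{b}''$ being recomputed from the new $\mbf{y}$; in Hybrid $2$, additionally every label is replaced by a uniform string in $\ints_2^{n+1}$. Hybrid $0$ (the real distribution) and Hybrid $1$ are indistinguishable by a sample-by-sample hybrid: a distinguisher between consecutive sub-hybrids yields a decisional-LPN distinguisher for a \emph{per-example} secret $\mbf{x}_j$ drawn from $\mathrm{Ber}(n^{-0.5})^n$ (the secret-from-noise distribution), where the reduction generates the whole rest of the bundle itself --- in particular it samples $\mbf{w}$ on its own, which is legitimate precisely because $\mbf{w}$ is independent of the challenged secret $\mbf{x}_j$. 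Hybrids $1$ and $2$ are indistinguishable by a single invocation of decisional LPN with secret $\mbf{w}$: once all the $\mbf{y}$'s are uniform, the array of all inputs (the rows of every $\mbf{A}$ together with every $\mbf{y}$, plus the test input) is a uniform matrix $\mbf{M}$, and the concatenation of all label strings is exactly $\mbf{M}\mbf{w}$ plus coordinatewise $\mathrm{Ber}(n^{-0.5})$ noise, so a decisional-LPN challenge can be read directly into the bundle \emph{with no knowledge of} $\mbf{w}$. Finally, in Hybrid $2$ the test label is uniform and independent of $h'$, $y^*$, $k$, so $\Pr[D'=1]=1/2$ exactly; chaining the $\poly(n)$-loss indistinguishabilities contradicts $\Pr[D'=1]\ge 1/2+1/\poly(n)$ on Hybrid $0$.

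The crux, and the step I expect to require the most care, is this two-hybrid split, which resolves an apparent circularity: to produce labelled $\rho_{\cY,\cZ}$ data one seems to need the secret $\mbf{w}$, which is exactly the LPN secret one wants to attack. The resolution is that the first hybrid attacks a \emph{different} secret (the per-example $\mbf{x}_j$), so the reduction there may freely choose $\mbf{w}$; and only after that hybrid has replaced every $\mbf{y}$ by a uniform vector does the dataset become a clean LPN instance in $\mbf{w}$ that can be lifted wholesale from the challenge. The remaining technical points are comparatively routine: choosing the held-out sample size in $A'$ so the accuracy estimate is within $o(1/t(n))$ while keeping everything polynomial; accounting for the $\poly(n)$ factor lost over the $\poly(n)$ sub-hybrids of the first step; and noting that the decisional hardness used (against time $\poly(n)$, with as few as $n$ samples in the first step and $\poly(n)$ in the second) follows from $\poly{\rm -LPN}_{n^{-0.5},n}$ since providing fewer LPN samples only makes the problem harder.
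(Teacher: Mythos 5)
Your proposal is correct, and it reaches the theorem by a route that differs from the paper's in both branches. For $\rho_{\cX,\cZ}$ the paper notes that the first $n$ label bits form a low-noise LPN sample independent of the input and appeals to the LPN assumption, whereas you argue unconditionally: since the test point's matrix $\mbf{A}^*$ never appears in the learner's view, each label coordinate is marginally uniform given the view once $\mbf{w}\neq\mbf{0}$ (and $\mbf{x}^*\neq\mbf{0}$ for the last bit), so \emph{every} hypothesis has $\ell_{\rm pop}\ge 1/2-\exp(-\Omega(\sqrt n))$; this is cleaner and strictly stronger, being information-theoretic. For $\rho_{\cY,\cZ}$ both you and the paper turn a hypothetical learner into a $\poly{\rm -DLPN}_{n^{-0.5},n}$ distinguisher (Definition \ref{def:dlpn}) and invoke search-to-decision, but the implementations differ: the paper uses a one-shot reduction that slices a single DLPN challenge---one secret shared across all blocks---into the $m(n)$ second-modality inputs, samples $\psi_{\mbf{w}}$ itself, tests the learned hypothesis on fresh slices, and asserts that in the uniform-challenge case the measured error must be near $1/2$; you instead symmetrize the learner and run a two-stage hybrid, first uniformizing each $\mbf{y}_j$ via per-example DLPN challenges with secret $\mbf{x}_j\sim\mathrm{Ber}(n^{-1/2})^n$ (legitimately sampling $\mbf{w}$ inside the reduction), and only then using a single DLPN challenge in $\mbf{w}$ to replace the labels by uniform bits, where the $1/2$ bound is exact. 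Your version buys real robustness at the two delicate points: it matches the fresh per-example $\mbf{x}_j$ of $\mu$ exactly (a shared-secret simulation is not literally distributed as $\rho_{\cY,\cZ}$), and it supplies the computational step that is genuinely needed in the uniform-challenge case, where the labels are still noisy parities in $\mbf{w}$ of uniform inputs rather than random bits, so the learner's failure there is not information-theoretic---your Hybrid~$1\to2$ is exactly what closes that step. The cost is a longer argument with a $\poly(n)$ hybrid loss, which the $\poly$-DLPN assumption absorbs.

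One small quantitative slip: estimating the accuracy to within $o(1/t(n))$ in the symmetrization is not quite enough, since the error incurred on the probability-$(1-1/t(n))$ branch can swallow the $1/t(n)^2$ advantage from the good branch; take the held-out estimate accurate to, say, $1/(4t(n)^2)$ (still $\poly(n)$ samples by Lemma \ref{chernoff}), which gives $\Ex{}{1-\ell_{\rm pop}(h')}\ge 1/2+\Omega(1/t(n)^2)$ and the rest of your argument goes through unchanged.
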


\begin{proof}[Proof of Theorem \ref{thm:unfeasible_unimodal}]
We may assume $\poly{\rm -LPN}_{\theta, n}$ for $\theta\triangleq n^{-0.5}$, else the statement is vacuous. 

We begin by considering the case of $\rho_{\rm uni} = \rho_{\cY, \cZ}$.
Towards a contradiction, we will prove that if there exists a polynomial $t: \nat \rightarrow \nat$, such that there exists a probabilistic algorithm $A'$ running in time $t(n)$, when $\rho \sim \mu$, and $A'$ is given access to $t(n)$ datapoints sampled according to $\rho_{\cY,\cZ}$, $A'$ outputs a hypothesis such that $\ell_{\rm pop}(h) \le 1/2-1/t(n)$ in the unimodal task with probability at least $1/t(n)$ over $\mu, \rho$ and randomness of $A'$, then $\poly{\rm -DLPN}_{\theta, n}$ for $\theta\triangleq n^{-0.5}$ does not hold. This suffices to contradict $\poly{\rm -LPN}_{n^{-0.5}, n}$, due to the polynomial equivalence between the two \citep{katz2010parallel}.

More specifically and formally, assume $A$ satisfies
\begin{align}\label{Aprime}
    \Pr_{A', \rho \sim \mu}\left[\Ex{(y,z) \sim \rho_{\cY, \cZ}}{\ell_0(h,y)} \le 1/2 - 1/t(n) : h \leftarrow A'\right]  \ge 1/t(n)
\end{align}
We will show that $\poly{\rm -DLPN}_{\theta, n}$ for $\theta\triangleq n^{-0.5}$ does not hold, and conclude the statement by invoking the polynomial time reduction to $\poly{\rm -LPN}_{\theta, n}$.

Suppose that $A'$ uses $m(n)$ examples sampled from $\rho_{\cY, \cZ}$.
Construct a distinguisher $\mbf{D}$ as follows. Given input $(\mbf{A}, \mbf{q})$ of the form $\ints_2^{n \times m(n)n} \times \ints_2^{1 \times m(n)n}$, use it to sample $m(n)$ tuples of the form $((\mbf{Y}, \mbf{y}), (\mbf{z}, z))$, as the distribution $\rho_{\cY, \cZ}$ would (here $\mbf{Y} \in \ints_2^{n \times n}$ and the rest are defined analogously).
This can be done by sampling $\psi_\mbf{w} \sim \zeta$, and computing $(\mbf{z}_i, z_i) = \psi_\mbf{w}[(\mbf{Y}_i, \mbf{y}_i)]$ where $(\mbf{Y}_i, \mbf{y}_i)$ is the $i^{th}$ contiguous block of length $n$ sliced out from $(\mbf{A}, \mbf{q})$, and $\mbf{y}_i$ has a random bit negated.

Given these tuples, let $\mbf{D}$ execute $A'$ to obtain a hypothesis $h$, and sample a fresh set of $p(n)$ samples $((\mbf{Y}, \mbf{y}), (\mbf{z}, z))$ computed as before ($p(n)$ is a polynomial to be defined later). Now, let $\mbf{D}$ then apply $h$ to all tuples $(\mbf{Y}, \mbf{y})$, to obtain a vector $Z^* \in \ints_{2}^{p(n)(n+1)}$. Let $\mbf{D}$ output 1 if $\frac{1}{n}|\{j : Z^*_j \not = (\mbf{z}, z)_j\}| \le 1/2 - 1/t(n)+1/2t(n)$ and 0 otherwise.

Now we conduct case analysis for $\mbf{D}$.
Consider the case that $(\mbf{A}, \mbf{q})$ given as input to $\mbf{D}$ is sampled according to the $\poly{\rm -DLPN}_{\theta, n}$ distribution. In this case, the dataset set of size $m(n)$ computed by $\mbf{D}$ is exactly distributed according to $\rho_{\cY,\cZ}$. Since $h$ satisfies equation (\ref{Aprime}) with probability at least $1/t(n)$, it follows that in this case $\epsilon\triangleq \frac{1}{n}|\{j : Z^*_j \not = (\mbf{z}, z)_j\}| \le 1/2 - 1/2t(n)$ with high probability if we choose $p(n)$ large enough. By application of Chernoff bounds, we can choose $p(n) \triangleq t(n)^{3}$ such that $\epsilon \le 1/2 - 1/2t(n)$ with probability at least $1-\exp(-t(n))$. Hence, in this case $\mbf{D}$ outputs 1 with probability at least $1/\poly(t(n))-\exp(-t(n))$ (taking into account $\epsilon \le 1/2 - 1/2t(n)$ with probability at least $1/t(n)$). 

Now, consider the case that $(\mbf{A}, \mbf{q})$ given as input to $\mbf{D}$ is sampled uniformly at random from the domain. In this case, it is clear that, because $\mbf{q}$ is a uniformly random string, $\epsilon'\triangleq \frac{1}{n}|\{j : Z^*_j \not = (\mbf{z}, z)_j\}| \ge 1/2 - 1/3t(n)$, with probability at least $1-\exp(-t(n))$. This follows again by application of Chernoff bounds, since $p(n) \triangleq t(n)^{3}$. Therefore, the output of $\mbf{D}$ in this case is 1 with probability at most $\exp(-t(n))$.

This completes the analysis because we have contradicted the equation in definition \ref{def:dlpn}.

To end the proof, we consider the easier case of $\rho_{\rm uni} = \rho_{\cX, \cZ}$. Consider that a sample of $\rho_{\cX, \cZ}$ is of the form $(x = (\mbf{x}, i), z = (\mbf{Yw}+\mbf{b}', \mbf{yw} + \mbf{b}''))$. 
By definition of the separation $\mu$---see beginning of section 3.1---the first $n$ bits of $z$ are $\mbf{Yw}+\mbf{b}'$, and can be written as $\mbf{Aw}+\mbf{b}'$, for $\mbf{b}' \sim \mathrm{Ber}(n^{-0.5})^n, \mbf{w} \sim \mathrm{Ber}(n^{-0.5})^n$, and $\mbf{A}$ uniformly random. 
Hence, the first $n$ bits of $z$ are a sample from the low-noise LPN distribution, which is actually completely independent of $x = (\mathbf{x}, i)$. Therefore, it is clearly hard to achieve
\begin{align}
    \Pr_{A', \rho \sim \mu}\left[\Ex{(x,z) \sim \rho_{\cX, \cZ}}{\ell_0(h,x)} \le 1/2 - 1/t(n) : h \leftarrow A'\right]  \ge 1/t(n)
\end{align}
without refuting the $\poly{\rm -LPN}_{\theta, n}$ assumption.

After proving hardness for $\rho_{\rm uni} \in \{\rho_{\cX, \cZ}, \rho_{\cY, \cZ}\}$, this suffices to prove the theorem.
\end{proof}

% Let 
% \[
%     \Pr[|X-\Ex{}{X}| \ge t] \le \exp\left(\frac{-p(n)^2/4t(n^2)}{2(p(n)/2 + p(n)/6t(n))}\right)
% \]
% \[
%     \Pr[|X-\Ex{}{X}| \ge t] \le \exp\left(\frac{-p(n)/4t(n^2)}{2(1/2 + 1/6t(n))}\right)
% \]

\section{Proof of Theorem \ref{theorem:securityBA}}\label{apx:security}

\begin{theorem}[Theorem \ref{theorem:securityBA} restated]
    Suppose that $\mu = (\chi, \eta, \zeta)$ is a super-polynomial computational separation.
    Then, for any polynomial $t$, and algorithm $D$ running in time $t(n)$,
    \[
    \ex{D({\rm View}({\rm A} \leftrightarrow {\rm B})) = b_B} < 1/2+ 1/t(n)
    \]
\end{theorem}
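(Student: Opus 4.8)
The plan is to prove security of Protocol 1 by contradiction: assume some polynomial-time distinguisher $D$ predicts Bob's bit $b_B$ from ${\rm View}({\rm A} \leftrightarrow {\rm B})$ with advantage $1/t(n)$, and build from it an efficient algorithm that learns the unimodal task $\rho_{\cY,\cZ}$ well enough to contradict the second bullet of Definition \ref{def:avgsep}. The first thing to notice is what the adversary's view actually contains: the messages $(y_i)_{i\in[k+1]}$ sent by Alice and the reply string (either $(z_i)_{i\in[k+1]}$ when $b_B=1$ or a uniform $w$ when $b_B=0$). Crucially, the view does \emph{not} include Alice's $x_i$'s, so $D$ is exactly an algorithm that, given unimodal data $(y_i, \text{label}_i)$, must decide whether the labels are ``real'' (generated by $\psi\sim\zeta$, i.e.\ distributed as $\rho_{\cY,\cZ}$) or uniformly random. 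This is a distinguishing/decision task, so the first real step is to convert it into a learning (search/prediction) task.

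The conversion has two parts. First, a standard hybrid argument over the $k+1$ coordinates: define hybrid distributions $H_0,\dots,H_{k+1}$ where $H_j$ has the first $j$ labels real and the remaining $k+1-j$ labels uniform; $H_0$ is the $b_B=0$ view and $H_{k+1}$ is the $b_B=1$ view, so $D$ distinguishes some adjacent pair $H_{j^\star}, H_{j^\star+1}$ with advantage $\ge \frac{1}{(k+1)t(n)}$. Second, by the Yao-style ``distinguishing implies next-element prediction'' equivalence \cite{yao1982theory}, a distinguisher between $H_{j^\star}$ and $H_{j^\star+1}$ yields a predictor that, given $(y_1,z_1),\dots,(y_{j^\star},z_{j^\star})$ (real labelled pairs) together with $y_{j^\star+1}$, outputs a guess for $z_{j^\star+1}$ that is correct with probability $\ge \tfrac12 + \frac{1}{(k+1)t(n)}$ over $\rho\sim\mu$ and fresh samples. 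Since the conditioning data is exactly i.i.d.\ samples from $\rho_{\cY,\cZ}$ and the thing to be predicted is a fresh $\rho_{\cY,\cZ}$-label for a fresh $y$, this predictor \emph{is} a unimodal learning algorithm for $\rho_{\cY,\cZ}$: it runs in time $\poly(t(n))$ (using that $\chi,\eta,\zeta$ are $\poly(n)$-samplable so the reduction can itself produce the needed real labelled pairs, and the bit it predicts is a single coordinate — recall $\cZ=\{0,1\}$ here), and it outputs (implicitly, as a circuit) a hypothesis $h$ with $\ell_{\rm pop}(h) \le \tfrac12 - \frac{1}{(k+1)t(n)}$ with probability $\ge \frac{1}{(k+1)t(n)}$ (after the usual amplification of the success probability and boosting/testing of the accuracy). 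Choosing $k$ to be a fixed polynomial in $n$, this contradicts the hardness clause of Definition \ref{def:avgsep} with the polynomial $t'(n) = (k+1)\cdot t(n)$ (up to constants), and we are done; the same argument with $\rho_{\cX,\cZ}$ in place of $\rho_{\cY,\cZ}$ handles the other modality, but since the separation's hardness clause covers both, one suffices for a contradiction.

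A few technical points need care. The predictor extracted from the hybrid/Yao step typically has only a tiny \emph{expected} advantage over the randomness of $\rho$ and the samples; I would use a Markov/averaging argument to argue that with probability $\ge \frac{1}{2(k+1)t(n)}$ over $\rho\sim\mu$ the per-instance advantage is still $\ge \frac{1}{2(k+1)t(n)}$, and then empirically estimate the hypothesis's accuracy on held-out $\rho_{\cY,\cZ}$ samples (using that $\chi,\eta,\zeta$ are efficiently samplable) to identify a good run — this matches the ``outputs a hypothesis with $\ell_{\rm pop}(h)\le\tfrac12-1/t'(n)$ with probability $\ge 1/t'(n)$'' form required by the definition, and also explains why the definition only demands such a weak success probability. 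One must also handle the sign convention in the bit-agreement security definition (which is phrased with $1/2 - 1/p(n)$); since $D$ can always be negated, an advantage-$\epsilon$ predictor of either sign suffices, so this is cosmetic. The main obstacle, and the place where the paper says its analysis is ``significantly more involved'' than \cite{pietrzak2008weak}, is precisely this last conversion: squeezing out of a \emph{distinguishing} adversary a genuine \emph{PAC hypothesis} (a function $h:\cY\to\cZ$ with small population $0$–$1$ risk) rather than merely a next-bit predictor that succeeds on the particular hybrid distribution — one has to make sure the predictor's conditioning inputs can be simulated from scratch (they can, by samplability of $\chi,\eta,\zeta$), that the fresh point $y_{k+1}$ is distributed exactly as a test point of $\rho_{\cY,\cZ}$, and that the various $1/\poly$ losses in the hybrid, the Markov step, and the accuracy-estimation step compose into a single $1/\poly(n)$ contradiction.
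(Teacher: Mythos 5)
Your overall route is the same as the paper's: a hybrid argument over the $k+1$ coordinates of the view, a Yao-style conversion of the hybrid distinguisher into a next-label predictor, and an averaging/testing step to turn that predictor into an output hypothesis with risk $\le 1/2 - 1/\poly(n)$ with probability $\ge 1/\poly(n)$, contradicting the unimodal hardness clause of Definition~\ref{def:avgsep} (the paper packages the last step as ``constructive averaging'' and picks the hybrid index at random rather than locating a best adjacent pair, but these are cosmetic differences). You also correctly isolate the key observation that the view contains no $x_i$'s, so the contradiction is genuinely with the $\rho_{\cY,\cZ}$ clause.

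One claim in your write-up is wrong and would break the reduction if taken literally: you assert (twice) that the predictor's conditioning inputs---the real labelled pairs occupying the positions below the hybrid index---can be ``simulated from scratch'' using the samplability of $\chi,\eta,\zeta$. They cannot. Sampling a fresh $\psi'\sim\zeta$ yourself produces pairs labelled by an \emph{independent} concept, whereas every hybrid $H_j$ has all of its real labels generated by the \emph{single} $\psi$ drawn in the protocol execution (and all $y_i$'s by the single $\phi$); the distinguisher's advantage is only guaranteed against that correlated distribution, so a from-scratch simulation yields views outside the hybrid chain and the advantage need not transfer. The correct move---which your main argument also states, and which is exactly what the paper's predictor $P_\mu$ does by taking ``$k\le\poly(n)$ samples from $\rho$'' as input---is to populate the hybrid using the unimodal learner's own training samples from $\rho_{\cY,\cZ}$ (same $\rho$), keeping their labels below index $j$ and randomizing them at and above $j$; the hardness clause supplies polynomially many such samples, so no extra samplability assumption is needed (note the security theorem, unlike the correctness theorem, does not assume $\chi,\eta,\zeta$ samplable). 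With that repair your proposal matches the paper's proof.
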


\begin{proof}
    Towards contradiction, suppose that there exists a probabilistic time $\poly(n)$ algorithm $D$ such that 
    \[
    \ex{D({\rm View}({\rm A} \leftrightarrow {\rm B})) = b_B} \ge 1/2+ 1/\poly(n)
    \]
    We show that this implies that $\mu = (\chi, \eta, \zeta)$ is not a super-polynomial computational separation.

    Define the distributions $H_1, \cdots, H_{k+1}$, where $H_i$ is defined as a sample from the following process:
    \begin{enumerate}
        \item Sample $(y_j, z_j) \sim \rho_{\cY, \cZ}$ for $j \in [k+1]$.
        \item For every $z_j$ for $j \in [i, k+1]$, replace it with a random bit $\sigma_j$.
        \item Output these $k+1$ pairs.
    \end{enumerate}

    Observe that by definition, when $b_B$ (Bob's bit) is 0, then ${\rm View}({\rm A} \leftrightarrow {\rm B})$ is distributed identically to $H_{0}$ for $\rho \sim \mu$. On the other hand, when $b_B$ is 1, then ${\rm View}({\rm A} \leftrightarrow {\rm B})$ is distributed identically to $H_{k+1}$ for $\rho \sim \mu$. Thus, the existence of $D$ implies that there exists a probabilistic $\poly(n)$ time decision algorithm $D'$, such that 
    \[
    \Ex{\substack{(y_i, z_i) \sim H_{k+1}}}{D'\left((y_i, z_i)_{i \in [k+1]}\right) = 1} -
    \Ex{\substack{(y_i, \sigma_i) \sim H_0}}{D'\left((y_i, \sigma_i)_{i \in [k+1]}\right) = 1} \ge {1 \over \poly(n)}
    \]
    By a standard hybrid argument, this implies that 
    \[
    \Ex{j \in [k+1]}{\Ex{\substack{s \sim H_{j}}}{D'\left(s\right) = 1} -
    \Ex{s \sim H_{j-1}}{D'\left(s\right) = 1}} \ge {1 \over (k+1)\poly(n)}
    \]

    The number of examples $k$ can be taken to be $\poly(n)$, by the assumption that Alice's multimodal learning algorithm runs in polynomial time. Thus, we get:
    \begin{align}\label{hybrid_dist}
        \Ex{j}{\Ex{\substack{s \sim H_{j}}}{D'\left(s\right) = 1} -
    \Ex{s \sim H_{j-1}}{D'\left(s\right) = 1}} \ge {1 \over \poly(n)}
    \end{align}
    Using equation (\ref{hybrid_dist}), it is possible to derive a randomized prediction algorithm $P_\mu$ for $\mu$ that satisfies 
    \begin{align}\label{rand_prediction}
        \Pr_{\substack{P_\mu, \rho \sim \mu,\\ (x,y,z) \sim \rho}}\Big[P_\mu(x,y) = z\Big] \ge {1 \over 2}+ {1 \over \poly(n)}
    \end{align}
    where the predictor has access to $k \le \poly(n)$ samples from $\rho$.
    Such a randomized predictor is enough to imply that there exists a polynomial $p: \nat \rightarrow \nat$ such that there is a time $p(n)$ probabilistic algorithm $A$ such that, when $\rho \sim \mu$, and given access to $p(n)$ datapoints sampled according to $\rho$, $A$ outputs a hypothesis that obtains population risk $\ell_{\rm pop}(h) \le 1/2-p(n)$ with probability $1/p(n)$ over $\mu, \rho$ and randomness of $A$.
    This follows from a standard ``constructive averaging'' argument, which we omit here (see \citet{arora2009computational} (appendix A) and \citet{karchmer2024distributional} section 5.2 for example). The existence of $A$ as described above completes the proof of the theorem. Hence, let us continue by constructing $P_\mu$ that satisfies equation (\ref{rand_prediction}).

    The main idea is to use the fact that $D'$ can be used to generate evidence that, for a random index $j$, a label bit $b_j$ is the correct label with respect to the underlying instance of the multimodal learning task. This is because $D'$ should output 0 with slightly higher probability in this case (see equation (\ref{hybrid_dist})). 

    Thus, we define $P_\mu$:

    \begin{algorithm}[H]
      \caption{$P_\mu \ | \ \rho \sim \mu$}
      \label{P1}
      \begin{algorithmic}[1]
        \STATE \textbf{Input}: $(x^*,y^*); k \le \poly(n)$ samples from $\rho$. 
        \STATE \textbf{Output}: $z^* \in \cZ$.
        \STATE Choose uniformly random $j \in [k+1]$.
        \STATE Use $k$ input samples to then sample $s \sim H_{j}$. Let $b_j$ be the label bit in the $j^{th}$ tuple of $s$.
        \STATE Derive the set $s'$ from $s$ by replacing $x_j,y_j$ from the $j^{th}$ tuple $(x_j,y_j,b_j) \in s$, with $x^*,y^*$. 
        \STATE \textbf{Output} $D'(s') + b_j \mod 2$.
        
      \end{algorithmic}
    \end{algorithm}

    To analyze the probability that $P_\mu$ satisfies equation (\ref{rand_prediction}), we condition on the correctness of $b_j$:
    \begin{align}
        \Pr\Big[P_\mu(x,y) = z\Big] = \Pr\Big[P_\mu(x,y) = z | b_j = z\Big] \cdot \Pr[b_j = z] + \Pr\Big[P_\mu(x,y) = z | b_j \not= z\Big] \cdot \Pr[b_j \not= z]
    \end{align}
    All the probabilities are taken over $P_\mu, \rho \sim \mu, (x,y,z) \sim \rho$. We then get that, because $b_j$ is by definition a uniformly random bit:
    \begin{align}
        \Pr\Big[P_\mu(x,y) = z\Big] = {1 \over 2}\left(\Pr\Big[P_\mu(x,y) = z | b_j = z\Big] + \Pr\Big[P_\mu(x,y) = z | b_j \not= z\Big]\right)
    \end{align}
    By considering the output of $P_\mu$, we can write this as:
    \begin{align}
        \Pr\Big[P_\mu(x,y) = z\Big] &= {1 \over 2}\left(\Pr\Big[D'(s') = 0 | b_j = z\Big] + \Pr\Big[D'(s') = 1 | b_j \not= z\Big]\right) \\
        &= {1 \over 2}\left(1+ \Pr\Big[D'(s') = 0 | b_j = z\Big] - \Pr\Big[D'(s') = 0 | b_j \not= z\Big]\right)\\
        &= {1 \over 2} + {1 \over 2}\left(\Pr\Big[D'(s') = 0 | b_j = z\Big] - \Pr\Big[D'(s') = 0 | b_j \not= z\Big]\right)
    \end{align}
    Then, by knowledge of the bounded difference from equation (\ref{hybrid_dist}), and by observing that $s'$ (sampled by $P_\mu$) is distributed identically to $H_{j-1}$ (conditioned on $b_j = z$) while $s'$ is distributed identically to $H_j$ (conditioned on $b_j \not= z$), we may conclude that:
    \begin{align}
        \Pr\Big[P_\mu(x,y) = z\Big] &\ge  {1 \over 2} + {1 \over \poly(n)} 
    \end{align}
    as we desired.
\end{proof}

% You can have as much text here as you want. The main body must be at most $8$ pages long.
% For the final version, one more page can be added.
% If you want, you can use an appendix like this one.  

% The $\mathtt{\backslash onecolumn}$ command above can be kept in place if you prefer a one-column appendix, or can be removed if you prefer a two-column appendix.  Apart from this possible change, the style (font size, spacing, margins, page numbering, etc.) should be kept the same as the main body.
%%%%%%%%%%%%%%%%%%%%%%%%%%%%%%%%%%%%%%%%%%%%%%%%%%%%%%%%%%%%%%%%%%%%%%%%%%%%%%%
%%%%%%%%%%%%%%%%%%%%%%%%%%%%%%%%%%%%%%%%%%%%%%%%%%%%%%%%%%%%%%%%%%%%%%%%%%%%%%%

\end{document}